\tikzstyle{decision} = [diamond, draw, fill=blue!20, 
\tikzstyle{block} = [rectangle, draw, fill=blue!20, rounded corners, 
\tikzstyle{line} = [draw, -latex', line width=3pt]
\tikzstyle{cloud} = [draw, ellipse,fill=red!20,
\newcommand{\RR}{\mathbf{R}}
\newcommand{\norm}[1]{\|#1\|}
\newcommand{\R}{\mathbb{R}}
\DeclareMathOperator{\E}{E}
\DeclareMathOperator{\Var}{Var}
\DeclareMathOperator{\argmin}{argmin}
\DeclareMathOperator{\KL}{KL}
\DeclareMathOperator{\const}{const}
\newcommand{\entropy}{\mathbb{H}}
\title{
An extended Perona-Malik model based on probabilistic models\thanks{This material was based upon work partially supported by the National Science Foundation under Grant DMS-1127914 to the Statistical and Applied Mathematical Sciences Institute. 
    Any opinions, findings, and conclusions or recommendations expressed in this material are those of the author(s) and do not necessarily reflect the views of the National Science Foundation.}}
\author{L. M. Mescheder \and D. A. Lorenz}
\institute{L. M. Mescheder\at Autonomous Vision Group, MPI T\"ubingen, 72076 T\"ubingen, Germany
\email{lmescheder@tuebingen.mpg.de}
\and D. A. Lorenz\at Institute for
  Analysis and Algebra, TU Braunschweig, 38092 Braunschweig, Germany,
  \email{d.lorenz@tu-braunschweig.de}}
\begin{document}
\maketitle

\begin{abstract}
The Perona-Malik model has been very successful at restoring images from noisy input.
In this paper, we reinterpret the Perona-Malik model in the language of Gaussian scale mixtures and derive some extensions of the model.
Specifically, we show that the expectation-maximization
(EM) algorithm applied to Gaussian scale mixtures leads
to the lagged-diffusivity algorithm for computing stationary points of the
Perona-Malik diffusion equations.
Moreover, we show how mean field approximations to these Gaussian scale mixtures lead
to a modification of the lagged-diffusivity algorithm that better captures the
uncertainties in the restoration.
Since this modification can be hard to compute in practice
we propose relaxations to the mean field objective to make the algorithm
computationally feasible.
Our numerical experiments show that this modified lagged-diffusivity algorithm often
performs better at restoring textured areas and fuzzy edges than the unmodified
algorithm. As a second application of the Gaussian scale mixture framework, we
show how an efficient sampling procedure can be obtained for the probabilistic model,
making the computation of the conditional mean and other expectations algorithmically
feasible.
Again, the resulting algorithm has a strong resemblance to the
lagged-diffusivity algorithm.
Finally, we show that a probabilistic version of the Mumford-Shah segementation model can be obtained in the same framework with a discrete edge-prior.


\end{abstract}

\keywords{Perona-Malik denoising, probabilistic models, mean-field approximation, Gaussian scale mixtures}
\section{Introduction}
\label{sec:intro}
In mathematical image processing, one is often given some (linear) forward operator $A: X \to Y$ and some noisy observation $v_n$. Our goal is to reconstruct a noise-free image that explains the observed data well, i.e. an approximate solution $u$ to $ A u \approx v_n$.
For example, $A$ can denote a convolution operator and $v_n$ a noisy measurement of the resulting blurry image.

Several approaches exist to solve problems of this type.
One popular approach is via variational regularization~\cite{scherzer2009variational,bertero1998inverseproblems}
where one formulates a minimization problem and seeks the solution as a minimizer of a weighted sum of data fit (or discrepancy) term and a regularization term.
The weighting factor is called regularization parameter and controls the trade-off between data fit and regularization.
A related approach is to reconstruct $u$ via the solution of an appropriate partial differential equation.
One of the earliest such partial differential equation is used in the Perona-Malik diffusion algorithm~\cite{perona1990scalespace,nordstrom1990biased,weickert1998anisotropic} where one solves the image reconstruction problem by computing stationary points of the nonlinear diffusion problem
\begin{equation}\label{eq:pde-formulation}
 \partial_t u +   \tfrac1{\sigma^{2}}A'(Au - v_n)=\nabla \cdot \left( f\left(\frac{1}{2}|\nabla u|^2\right) \nabla u \right)
\end{equation}
where $\sigma$ is an estimate for the noise level and $f:\R\to\R$ is some nonlinear, positive and  decreasing function that stops the diffusion at points where $|\nabla u(x)|$ is large.
In fact, the two approaches are closely related since the stationary points of this equation are indeed
stationary points of the optimization problem
\begin{equation}\label{eq:var-formulation}
 \min_u \frac{1}{2\sigma^{2}}\|Au - v_n\|^2 + \int F\left(\frac{1}{2}|\nabla u(x)|^2\right) \mathrm d x,
\end{equation}
where $F$ is any function with $F^\prime = f$.

An alternative approach is given by Bayesian statistics, where  we define a prior distribution $p(u)$ on the space of possible images and model the forward model including the noise by the likelihood term $p(v_n \mid u)$. The posterior distribution is then given by
\begin{equation}\label{eq:bayes}
 p(u \mid v_n) = \frac{p(u) p(v_n \mid u)}{\int p(u) p(v_n \mid u) \mathrm d u}.
\end{equation}
In theory, the posterior distribution carries all information that we put into the model
but as a distribution on the space of all possible images it is very high dimensional 
and it is not straightforward to extract useful information.
One piece of information is the so-called maximum-a-posterior (MAP) estimate which is simply the image $u$ that maximizes the posterior, i.e. the solution of
\begin{equation}
  \max_{u}p(u) p(v_n \mid u)
\end{equation}
which, by taking the negative logarithm, again amounts to a minimization problems.
Beyond the MAP estimate there are other quantities that can be computed from the posterior distribution, see e.g.~\cite{kaipio2005statistical} for further information.

In this work we derive a probabilistic model for the image reconstruction problem and analyze different methods to extract information for the reconstruction.
We show that using the expectation-maximization algorithm \cite{dempster1977maximum} reproduces the lagged-diffusivity approach for the Perona-Malik equation~\cite{vogel1996iterative,chan1999convergence}. However, other methods exists and in this work we specifically analyze a mean-field approach and a sampling strategy which lead to different methods that extend the Perona-Malik model.

Some advantages of the probabilistic approach to the image reconstruction problem are that uncertainty in the reconstruction is explicitly part of the model which can lead to more meaningful reconstructions.
Moreover, estimates for the uncertainty can be computed, e.g. marginal variances.
Also, the probabilistic model comes from a sound foundation that combines well with learning approaches in which parts of the reconstruction model itself are learned from the data as in~\cite{chen2015learning}.
As a result, the probabilistic approach opens the full toolbox of Bayesian statistics and allows to employ further techniques, e.g. from the field of model selection~\cite{yu2010imagemodelselection}.

Probabilistic models have been used in image processing since long ago and we only give a few pointers.
Geman and Geman \cite{geman1984stochastic} were one of the first to propose a probabilistic
framework for image restoration tasks. They use simulated annealing for optimization, which
is often not very efficient.
More recently, and closer to our work, Schmidt et al. \cite{schmidt2010agenerative} use Gaussian scale mixtures to train generative image models for denoising and inpainting tasks.
A classical resource for the statistical treatment of inverse problems is given in \cite{kaipio2005statistical}.
Other probabilistic approaches to the Perona-Malik model differ from our approach: On the one hand~\cite{krim1999nonlinear,bao2004smart} build on stochastic partial differential equations and random walks on lattices and not on Bayesian statistics.
On the other hand~\cite{pizurica2006bayesian} takes a Bayesian approach to non-linear diffusion but derives a new diffusivity function $f$ by these means.

The article is organized as follows.
In Section~\ref{sec:gsm} we introduce our model based on Gaussian scale mixtures, introduce the notion of an exponential pair of random variables and derive a few properties of our model.
Section~\ref{sec:methods} gives three methods to infer information from the posterior distribution, namely expectation-maximization, mean field methods and sampling.
In Section~\ref{sec:applications} we present results of our methods and in Section~\ref{sec:conclusion} we draw some conclusions.



\section{Gaussian Scale Mixtures}
\label{sec:gsm}
In the following we denote by $\Omega$ the domain of the image which we assume to be a regular rectangular grid,
by $x\in\Omega$ the pixels and by $u$ the image, i.e $u(x)\in \RR$ is the gray value of $u$ at pixel $x$.
We denote by $\nabla u(x)\in\RR^{2}$ the discrete gradient of $u$ at $x$, i.e.
\[
 \nabla u(x_1, x_2) := \begin{pmatrix}                   
  u(x_1 + 1, x_2) - u(x_1, x_2) \\
  u(x_1, x_2+1) - u(x_1, x_2)
\end{pmatrix}.
\]
For a vector field $v:\Omega\to\RR^{2}$ we denote by $\nabla\cdot v(x)$ the discrete divergence of $v$ at $x$, i.e.
\begin{multline*}
 \nabla \cdot v(x_1, x_2) := v_1(x_1, x_2) - v_1(x_1-1,x_2) \\
  + v_2(x_1, x_2) - v_2(x_1, x_2-1).
\end{multline*}
Moreover, boundary conditions are chosen such that $\nabla$ and $-\nabla\cdot$ are adjoint to each other.

A Gaussian model for the magnitude of the gradient of $u$ corresponds to a density
\footnote{Strictly speaking, $p(u)$ is not a proper probability density, as it is not normalizable. In Bayesian statistics, such probability densities are often referred to as \emph{improper probability densities}. In practice, only the posterior density $p(u \mid v_n)$ is used for calculations, which generally defines a normalizable probability density.}
\[
p(u) \propto \exp\Big(-\tfrac\lambda2\sum_{x\in\Omega}|\nabla u(x)|^{2}\Big).
\]
If we further assume that our observation $v_{n}$ is obtained by $Au$ plus Gaussian white noise with variance $\sigma^{2}$, the corresponding likelihood is also Gaussian
\[
p(v_{n}\mid u) \propto \exp\Big(-\tfrac1{2\sigma^{2}}\|Au-v_{n}\|^{2}\Big).
\]
Consequently, the posterior density is given by
\[
p(u\mid v_{n}) \propto \exp\big(-\tfrac1{2\sigma^{2}}\|Au-v_{n}\|^{2}-\tfrac\lambda2\sum_{x\in\Omega}|\nabla u(x)|^{2}\big).
\]
It is well known, that this model is not well-suited to denoise or deblur natural images, e.g. one observes that the MAP estimate for $u$ cannot have sharp edges anymore. 

We propose a \emph{Gaussian scale mixture} (GSM) for the magnitude of the gradient, that is a Gaussian model with additional latent scale variables $z(x)$, i.e. prior models that have a density of the form
\begin{equation}
p(u,z)\propto\exp\left(-\sum_{x}\left(\frac{z(x)}{2}|\nabla u(x)|^{2}+v(z(x))\right)\right)\label{eq:scale-mixture-model}
\end{equation}
with respect to some reference measure $\lambda^{n}(\mathrm{d}u)\otimes\bigotimes_{x}q(\mathrm{d}z(x))$.
Here, $\lambda^{N}$ denotes the $N$-dimensional Lebesgue measure and $q$ is a $1$-dimensional Borel measure on $[0,\infty)$, for example the Lebesgue
measure on $[0,\infty)$ or the counting measure on some discrete
subset of $[0,\infty)$.

Intuitively, the introduction of such latent scales allows the model
to freely ``choose'' the $z(x)$ in the most appropriate way. In particular,
the model can ``choose'' to make the $z(x)$ small on edges, if this
increases the overall probability. When done right, this can fix the
inability of the Gaussian model to preserve edges.

Given a noisy measurement $v_{n}$ of $Au$ with Gaussian noise,
the posterior density $p(u,z\mid v_{n})$ is given by
\begin{multline}
p(u,z\mid v_{n})\propto\exp\Bigl(-\frac{1}{2\sigma^{2}}\|Au-v_{n}\|^{2} \\
-\frac{1}{2}\sum_{x}z(x)|\nabla u(x)|^{2}-\sum_{x}v(z(x))\Bigr).\label{eq:gsm-posterior}
\end{multline}
Let
\begin{equation*}
\mu(\mathrm{d}u)=\lambda^{n}(\mathrm{d}u)\quad\text{and}\quad\nu(\mathrm{d}z)=\bigotimes_{x\in\Omega}q(\mathrm{d}z(x)).
\end{equation*}
The density in \eqref{eq:gsm-posterior} with respect to $\mu\otimes\nu$
is of the form
\begin{equation}
\label{eq:exponential-pair}
p(u,z\mid v_{n})\propto\exp\left(\langle s_{0}(u),r_{0}(z)\rangle+h(u)+g(z)\right)
\end{equation}
with so-called \emph{sufficient statistics}
\begin{equation}\label{eq:suffstatt-s0-r0}
s_{0}(u)=\left(-\frac{1}{2}|\nabla u(x)|^{2}\right)_{x\in\Omega}\quad\text{and}\quad r_{0}(z)=z
\end{equation}
and
\begin{equation}\label{eq:suffstat-h-g}
h(u)=-\frac{1}{2\sigma^{2}}\|Au-v_{n}\|^{2}\quad\text{and}\quad g(z)=-\sum_{x}v(z(x)).
\end{equation}
We call a pair of random variables $(u,z)$ that has a density of the form
\eqref{eq:exponential-pair} with respect to some reference measure $\mu \otimes \nu$
an \emph{exponential pair of random variables}. 
The name \emph{exponential pair} is inspired by the fact that any exponential pair,
both conditional densities $p(u \mid z, v_n)$ and $p(z \mid u, v_n)$ are in the
exponential family, i.e. they are of the form
$x\mapsto \mathrm e^{\theta^\intercal \bar s(x) - \bar A(\theta)}$,
see \cite[Section 9.2]{murphy2012machine} for details.
Using $s=(s_{0},h,1)$ and $r=(r_{0},1,g)$, the density of an exponential pair with respect to
$\mu \otimes \nu$ can also be written in the form
\begin{equation}
 p(u,z\mid v_{n})\propto\exp\left(\langle s(u),r(z)\rangle\right).
\end{equation}
We will use the notion of an exponential pair later in Section~\ref{sec:meanfield} to derive mean field approximations.

To see how the GSM in~(\ref{eq:gsm-posterior}) is related to the Perona-Malik model, we introduce
\begin{equation}
\psi(t)=-\log\int\mathrm{e}^{-tz-v(z)}q(\mathrm{d}z).\label{eq:scale-mixture-func}
\end{equation}
and  marginalize out the $z(x)$ in~(\ref{eq:gsm-posterior}): A simple calculation shows that
\begin{multline*}
  \int \exp\Big(\sum_{x}-\tfrac12 z(x)|\nabla u(x)|^{2}-v(z(x))\Big)q(dz) \\
  = \prod_{x}\int \exp\Big(-\tfrac12 z(x)|\nabla u(x)|^{2}-v(z(x))\Big)q(dz(x))\\
  = \prod_{x}\exp(-\psi(\tfrac12|\nabla u(x)|^{2})\Big) \\
  = \exp\Big(-\sum_{x}\psi(\tfrac12|\nabla u(x)|^{2})\Big)
\end{multline*}
and we obtain
\begin{multline}
\label{eq:scale-mixture-primal}
p(u\mid v_{n}) \propto
\exp\Biggl(-\frac{1}{2\sigma^{2}}\|Au-v_{n}\|^{2} \\
-\sum_{x}\psi\left(\frac{1}{2}|\nabla u(x)|^{2}\right)\Bigr).
\end{multline}
We see that the squared magnitude of the gradient is no longer penalized
linearly, but rescaled according to $\psi$ which is precisely the variational formulation~(\ref{eq:var-formulation}) of the Perona-Malik model.

In what follows, the derivatives of $\psi$ will play an important
role. The following lemma is an easy consequence of an analogous result from the theory of exponential families:

\begin{lemma}
\label{lem:psi-derivatives}$\psi$ is smooth on $(0,\infty)$ with
first and second derivative
\begin{enumerate}
\item $\psi^{\prime}(t)=\E\left(z(x)\mid\frac{1}{2}|\nabla u(x)|^{2}=t\right)$
\item $\psi^{\prime\prime}(t)=-\Var\left(z(x)\mid\frac{1}{2}|\nabla u(x)|^{2}=t\right)$.
\end{enumerate}
In particular, $\psi$ is a concave function.
\end{lemma}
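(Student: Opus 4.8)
The plan is to recognize $\psi$ as the negative of the log-partition function of the conditional law of a single scale $z(x)$ and then read its derivatives off as the cumulants of that law. Write
\[
Z(t)=\int e^{-tz-v(z)}\,q(\mathrm dz),
\]
so that $\psi(t)=-\log Z(t)$. By \eqref{eq:gsm-posterior}, the conditional density of $z(x)$ given $\tfrac12|\nabla u(x)|^{2}=t$ is exactly $p(z\mid t)\propto e^{-tz-v(z)}$ with respect to $q$, with normalizing constant $Z(t)$. Thus $Z(t)$ is the partition function of a one-parameter exponential family in $z$ with natural parameter $\theta=-t$ and sufficient statistic $z$, and the two claimed identities are precisely the classical statement that the first two derivatives of a log-partition function equal the mean and the variance of the sufficient statistic.

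First I would establish smoothness. Fixing $t_{0}>0$, I would choose $\varepsilon>0$ with $t_{0}-2\varepsilon>0$ and work on the neighborhood $(t_{0}-\varepsilon,t_{0}+\varepsilon)$. There each integrand $z^{k}e^{-tz-v(z)}$ is dominated by an integrable function: for $z\ge0$ one has $z^{k}e^{-tz}\le C_{k}\,e^{-(t_{0}-2\varepsilon)z}$, and $\int e^{-(t_{0}-2\varepsilon)z-v(z)}q(\mathrm dz)=Z(t_{0}-2\varepsilon)<\infty$ since $t_{0}-2\varepsilon$ lies in the domain of $\psi$. This domination permits differentiation under the integral sign to all orders, yielding
\[
Z^{(k)}(t)=\int(-z)^{k}e^{-tz-v(z)}\,q(\mathrm dz),
\]
so $Z$ is smooth and strictly positive on $(0,\infty)$ and hence $\psi=-\log Z$ is smooth there. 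This domination step is exactly the ``analogous result from the theory of exponential families'' the lemma alludes to, and it is the main technical obstacle; everything afterwards is bookkeeping.

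With differentiation justified the formulas follow quickly. From $\psi'=-Z'/Z$ together with $Z'(t)=-\int z\,e^{-tz-v(z)}\,q(\mathrm dz)$ I obtain
\[
\psi'(t)=\frac{\int z\,e^{-tz-v(z)}\,q(\mathrm dz)}{Z(t)}=\E\!\left(z(x)\mid\tfrac12|\nabla u(x)|^{2}=t\right).
\]
Differentiating once more gives $\psi''=-Z''/Z+(Z'/Z)^{2}$, and since $Z''(t)/Z(t)=\E(z^{2}\mid t)$ while $(Z'(t)/Z(t))^{2}=\E(z\mid t)^{2}$, this collapses to
\[
\psi''(t)=-\bigl(\E(z^{2}\mid t)-\E(z\mid t)^{2}\bigr)=-\Var\!\left(z(x)\mid\tfrac12|\nabla u(x)|^{2}=t\right).
\]
Finally, concavity is immediate: a variance is nonnegative, so $\psi''\le0$ throughout $(0,\infty)$. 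The only degenerate case I anticipate is when the conditional variance vanishes (e.g.\ a point-mass $q$), but this merely forces $\psi''=0$ and so does not affect the concavity conclusion.
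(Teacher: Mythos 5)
Your proof is correct and follows essentially the same route as the paper's: logarithmic differentiation of the partition function $Z(t)=\int e^{-tz-v(z)}\,q(\mathrm dz)$, identifying $\psi'$ as the conditional mean of $z(x)$ and $\psi''$ as minus its conditional variance, with concavity following from nonnegativity of the variance. The only difference is that you explicitly carry out the domination argument justifying differentiation under the integral sign, a step the paper delegates to ``an analogous result from the theory of exponential families''; this fills in a detail rather than changing the approach.
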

\begin{proof}
  By logarithmic differentiation and looking at~(\ref{eq:scale-mixture-model}), we calculate the derivative of $\psi$ as
  \begin{align*}
    \psi'(t) & = \frac{\int z\mathrm{e}^{-tz-v(z)}q(\mathrm{d}z)}{\int\mathrm{e}^{-tz-v(z)}q(\mathrm{d}z)}\\
    & = \E\left(z(x)\;\Big|\; \frac{1}{2}|\nabla u(x)|^{2}=t\right).
  \end{align*}
  For the second claim note that (using 1.)
  \begin{multline*}
    \Var\left(z(x)\mid \frac12|\nabla u(x)|^{2}=t\right) \\ = 
   \E\left(z(x)^2\;\Big|\; \frac{1}{2}|\nabla u(x)|^{2}=t\right) - \psi^\prime(t)^2
  \end{multline*}
  which, after some manipulation, evaluates to $-\psi^{\prime\prime}(t)$.\qed
\end{proof}

The fact that $\psi$ is a concave function has the important consequence
that computing a MAP-assignment of \eqref{eq:scale-mixture-primal}
generally leads to non-convex optimization problems.

Moreover, Lemma \ref{lem:psi-derivatives} implies that we can compute
the first and second order moments of $z$ given $u$ without having
an explicit representation of the joint distribution $p(u,z)$. All
we need is an explicit representation of $\psi$ and knowledge that
such an explicit representation exists.
Note that the function $\Psi(t) = \mathrm{e}^{-\psi(t)}$ is in fact the Laplace
transform of the measure $\mathrm{e}^{-v}q$.
By Bernstein's theorem, we know that $\psi$ can be written in the form~(\ref{eq:scale-mixture-func}) if an only if $\Psi(t) = \mathrm{e}^{-\psi(t)}$
defines a completely monotone function~\cite{andrews1974scale}, i.e. for $k\geq 0$ it holds that $(-1)^{k}\Psi^{(k)}\geq 0$.

Using the particularly simple $v(z) = z$ and $q$ being the Lebesgue measure on $(0,\infty)$ we get
\[
\psi(t) = -\log\int_{0}^{\infty}e^{-tz-z}dz = -\log\left(\frac1{1+t}\right) = \log(1+t).
\]
Using this $\psi$ in~(\ref{eq:scale-mixture-primal}) we obtain the respective minimization problem~(\ref{eq:var-formulation}) with $F=\psi$ and hence, the differential equation~(\ref{eq:pde-formulation}) is
\begin{align*}
  \partial_{t}u + A'(Au-u_{n}) & = \nabla\cdot\Big(\psi^\prime\big(\tfrac12 |\nabla u|^{2}\big)\nabla u\Big)\\
   & = \nabla\cdot\Bigg(\frac{\nabla u}{1 + \tfrac12|\nabla u|^{2}}\Bigg)
\end{align*}
which is the original Perona-Malik model~\cite{perona1990scalespace}.
Note that Lemma~\ref{lem:psi-derivatives} provides a new interpretation for the diffusion coefficient:
it says that the diffusion coefficient $\psi^\prime$ is the expectation of the latent variable $z$ conditioned on the current magnitude of the image gradient.
The other proposed function from~\cite{perona1990scalespace}, $\psi^\prime(t) = \exp(-t)$ leads to $\psi(t) = 1-\exp(-t)$. This function also fits into our framework,
since the mapping $\Psi(t) =  \exp(-\psi(t)) = \exp(\exp(-t)-1)$ can be shown to be completely monotone.%
\footnote{By a simple induction argument one gets $\Psi^{(n)}(t) = (-1)^{n}\sum_{k=0}^{n-1}\binom{n}{k}(-1)^{k}\Psi^{(k)}(t)\exp(-t)$ from which we obtain $(-1)^{n}\Psi^{(n)}\geq 0$ as desired.}
However, we are not aware of a closed form for the inverse Laplace transform of this function, and hence, the form of $v$ remains elusive in this case.


\section{Methods}
\label{sec:methods}

In this section we develop different approaches to use the posterior density~(\ref{eq:gsm-posterior}) for image reconstruction tasks, such as denoising or deblurring. 

\subsection{MAP estimation}
\label{sec:em}

The first method we derive solves the maximum a-posteriori (MAP) problem for Gaussian scale
mixtures like in \eqref{eq:scale-mixture-model}, i.e. we consider
the posterior distribution over $u$ and $z$, which is given by
\begin{multline}
\label{eq:gsm-map-posterior}
p(u,z\mid v_{n})\propto
\exp\Bigl(-\frac{1}{2\sigma^{2}}\|Au-v_{n}\|^{2} \\
-\sum_{x}\left(\frac{1}{2}z(x)|\nabla u(x)|^{2}+v(z(x))\right)\Bigr)
\end{multline}
and aim for a maximizer of the distribution.

As derived in the previous section, the marginal distribution with respect to $u$ is
\begin{multline}
\label{eq:map-marginal-model}
p(u\mid v_{n})
\propto\exp\Biggl(-\frac{1}{2\sigma^{2}}\|Au-v_{n}\|^{2} \\
-\sum_{x}\psi\Bigl(\frac{1}{2}|\nabla u(x)|^{2}\Bigr)\Biggr).
\end{multline}
There are two different versions of MAP-assignments that can be computed: we
can either compute the MAP of both $u$ and $z$ with respect to the
joint distribution $p(u,z\mid v_{n})$ or the MAP of either $u$ or
$z$ with respect to the marginal distributions $p(u\mid v_{n})$
and $p(z\mid v_{n})$.

Both methods can be understood as methods of approximating the joint
distribution $p(u,z\mid v_{n})$ by a simpler, possibly deterministic,
probability distribution. However, computing the MAP over the joint
distribution is generally a crude approximation, as it approximates
$p(u,z\mid v_{n})$ with a completely deterministic distribution.
In contrast to this, computing the MAP with respect to $p(u\mid v_{n})$
still captures the uncertainty in $z$.
An even better alternative is to use mean field approximations which we
will discuss in Section~\ref{sec:meanfield}.

In this paper we consider the problem of computing a MAP-assign\-ment with respect
to the marginal model in \eqref{eq:map-marginal-model}. The corresponding optimization
problem can be written as
\begin{equation}
\label{eq:gsm-marginal-map-problem}
\min_{u}\frac{1}{2\sigma^{2}}\|Au-v_{n}\|^{2}+\sum_{x}\psi\left(\frac{1}{2}|\nabla u(x)|^{2}\right).
\end{equation}
In general, there are multiple optimization methods that can be employed
and in some cases there are specialized methods to solve the optimization
problem. For example, if $t\mapsto\psi\left(t^{2}\right)$ defines
a convex function, we can use tools from convex optimization to solve
the optimization problem. However, this is not the case for general
$\psi$, so that it makes sense to look for general purpose algorithms.

A simple option is to use gradient descent on \eqref{eq:gsm-marginal-map-problem}  which formally leads to the iteration
\begin{multline*}
  u^{k+1} = u^{k} \\- h_{k}\Big(\tfrac1{\sigma^{2}}A'(Au^{k}-v_{n}) - \nabla\cdot\big(\psi'(\tfrac12|\nabla u^{k}|^{2})\nabla u^{k})\big)\Big).
\end{multline*}
For suitable stepsizes $h_{k}$ this is a descent method, but the choice of stepsize is cumbersome and convergence is usually slow.
A second option is to use EM as we describe later in this section.
Before we do so, we would like to describe another way: As $-\psi$ is a convex function (which we extend by $+\infty$ for negative arguments), it is also possible to
dualize $-\psi$ as
\begin{equation*}
-\psi(t)=\max_{s\in\mathbb{R}}\Big\{s\,t-(-\psi)^{*}(s)\Big\}
\end{equation*}
which we also write as
\begin{equation*}
  \psi(t) = \min_{s\in\mathbb{R}}\Big\{-s\,t + (-\psi)^{*}(s)\Big\}.
\end{equation*}
Hence, we can reformulate~\eqref{eq:gsm-marginal-map-problem} as
\begin{multline*}
\min_{u,s}\frac{1}{2\sigma^{2}}\|Au-v_{n}\|^{2} + \sum_{x}\left( -s(x)\tfrac12|\nabla u(x)|^{2} \right.\\ + \left. (-\psi)^{*}\left( s(x)\right) \right).
\end{multline*}
Since the problem is convex in both $u$ and $s$ (but not jointly so), we can perform coordinate descent. This leads to the following updates:
\begin{align*}
  s^{k+1}& = -\psi'(\tfrac12|\nabla u^{k}|^{2})\\
  u^{k+1}& \leftarrow \text{solve}\ \tfrac{1}{\sigma^2} A'(Au-v_{n})+\nabla\cdot(s^{k+1}\nabla u) = 0\\
\end{align*}
Here we used that the inverse of the derivative of $(-\psi)^{*}$ is $(-\psi)'$. We can combine this into one iteration and get that $u^{k+1}$ is given as the solution of the linear equation
\begin{equation*}
  \tfrac{1}{\sigma^2}A'(Au^{k+1}-v_{n})-\nabla\cdot(\psi'(\tfrac12|\nabla u^{k}|^{2})\nabla u^{k+1}) = 0.
\end{equation*}
In the context of the Perona-Malik equation, this scheme is known as \emph{lagged diffusivity}~\cite{vogel1996iterative,chan1999convergence}. From the above derivation we obtain a new proof of that fact that this scheme is indeed a descent method for the objective function in~\eqref{eq:gsm-marginal-map-problem}.

Next we derive an EM method and we will see that EM and the coordinate descent method are equivalent.

The EM method alternates between an E step (\emph{expectation}) and an M step (\emph{maximization}).
In this particular example, we alternatingly estimate the $z$-variable and maximize with respect to the $u$ variable.
The E step for $z$ is, as derived in Lemma~\ref{lem:psi-derivatives}
\[
\xi_{0}^{(k)} \leftarrow \E(z(x)\mid \tfrac12|\nabla u^{(k)}(x)|^{2}) = \psi'(\tfrac12|\nabla u^{(k)}(x)|^{2})
\]
and the M step is to maximize the expectation
\[
 \E_{z} \left( \log p(u,z\mid v_{n}) \mid u^{(k)} \right)
\]
with respect to $u$.

Together, this leads the scheme in Algorithm~\ref{alg:gsm-map}.

\begin{algorithm}
  \noindent \begin{centering}
    \begin{algorithmic}[1]
    \Procedure{gsm\_map}{$v_n$}
        \While{not converged}
            \For{$x \in \Omega$}
                \State $\xi_0(x) \gets
                    \psi'(\tfrac{1}{2}|\nabla u(x)|^2)$
            \EndFor

            \State $u \gets
               \argmin_{u'}
               \frac{1}{2\sigma^2}\|Au' - v_n\|^2
               + \frac{1}{2}\sum_x\xi_0(x) |\nabla u'(x)|^2$

        \EndWhile
        \State \textbf{return} $u$
    \EndProcedure
\end{algorithmic}

    \par\end{centering}

  \caption{EM algorithm for Gaussian scale mixture\label{alg:gsm-map}}
\end{algorithm}

On the other hand, the gradient of the objective function in~\eqref{eq:gsm-marginal-map-problem} is given by
\begin{equation*}
\frac{1}{\sigma^{2}}A'(Au-v_{n})-\nabla\cdot\left(\psi^{\prime}\left(\frac{1}{2}|\nabla u|^{2}\right)\nabla u\right)
\end{equation*}
and gradient descent corresponds to solving the gradient flow
\begin{equation*}
\partial_{t}u+\frac{1}{\sigma^{2}}A'(Au-v_{n})=\nabla\cdot\left(\psi^{\prime}\left(\frac{1}{2}|\nabla u|^{2}\right)\nabla u\right).
\end{equation*}

Instead of finding
stationary points of this equation by integrating the differential
equation, this can be done more efficiently by using
lagged diffusivity \cite{vogel1996iterative,chan1999convergence}, which we described above.
If we set
\begin{equation}
\xi_{0}^{(k)}:=\psi^{\prime}\left(\frac{1}{2}|\nabla u^{(k)}|^{2}\right)\label{eq:lagged-diffusivity-parameter-update}
\end{equation}
we have to solve the linear equation
\begin{equation}
\frac{1}{\sigma^{2}}A'(Au-v_{n})-\nabla\cdot\left(\xi_{0}^{(k)}\nabla u\right)=0\label{eq:lagged-diffusivity}
\end{equation}
for $u$ to obtain the next iterate $u^{(k+1)}$. As it turns out, for Gaussian scale mixtures, this is equivalent to EM:
\begin{lemma}
\label{lem:EM-lagged-diffusivity-equivalence}
For Gaussian scale mixture
models as in \eqref{eq:scale-mixture-model}, EM and lagged diffusivity
yield the same algorithm.
\end{lemma}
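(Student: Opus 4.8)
The plan is to show that the two schemes produce identical iterates by matching their two constituent operations one at a time: both alternate a pointwise diffusivity update with the solution of a linear system, so it suffices to check that the diffusivities agree and that the linear problems agree. First I would note that the E-step and the parameter update~\eqref{eq:lagged-diffusivity-parameter-update} are literally the same formula. The E-step sets $\xi_0^{(k)}(x) = \E(z(x)\mid \tfrac12|\nabla u^{(k)}(x)|^2)$, and Lemma~\ref{lem:psi-derivatives}(1) identifies this with $\psi'(\tfrac12|\nabla u^{(k)}(x)|^2)$. The only point needing justification is that conditioning on the whole image $u^{(k)}$ reduces to conditioning on the single scalar $\tfrac12|\nabla u^{(k)}(x)|^2$; this holds because, given $u$, the latent scales $z(x)$ are mutually independent and each $z(x)$ enters the joint density~\eqref{eq:gsm-posterior} only through $\tfrac12|\nabla u(x)|^2$.

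Next I would expand the M-step objective. Writing out $\log p(u,z\mid v_n)$ from~\eqref{eq:gsm-map-posterior} and taking the expectation over $z$ conditioned on $u^{(k)}$, linearity of expectation gives
\[
\E_z\big(\log p(u,z\mid v_n)\mid u^{(k)}\big)
= -\frac{1}{2\sigma^2}\|Au-v_n\|^2
-\frac12\sum_x \E(z(x)\mid u^{(k)})\,|\nabla u(x)|^2 + \const,
\]
since the $\sum_x v(z(x))$ term depends only on $z$ and so contributes a $u$-independent constant. Substituting $\E(z(x)\mid u^{(k)}) = \xi_0^{(k)}(x)$ from the previous step turns maximization of this quantity into exactly the minimization performed in the M-step of Algorithm~\ref{alg:gsm-map}.

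Finally I would identify the minimizer. Because $\xi_0^{(k)}$ is frozen during the M-step, the objective is quadratic in $u$, so its stationary point is characterized by a vanishing gradient. Using that $\nabla$ and $-\nabla\cdot$ are adjoint, the gradient of $\tfrac12\sum_x \xi_0^{(k)}(x)|\nabla u(x)|^2$ is $-\nabla\cdot(\xi_0^{(k)}\nabla u)$, while that of the data term is $\tfrac1{\sigma^2}A'(Au-v_n)$; setting the sum to zero reproduces exactly the linear system~\eqref{eq:lagged-diffusivity}. Hence each M-step solves the same equation that lagged diffusivity solves, and since the diffusivities coincide as well, the two algorithms generate the same sequence $u^{(k)}$.

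I expect the only genuinely substantive point to be the reduction of the conditional expectation $\E(z(x)\mid u^{(k)})$ to $\psi'(\tfrac12|\nabla u^{(k)}(x)|^2)$, that is, invoking the conditional independence of the $z(x)$ given $u$ together with Lemma~\ref{lem:psi-derivatives}. Everything else is routine bookkeeping: discarding $u$-independent constants in the expected complete-data log-likelihood and using the adjointness of $\nabla$ and $-\nabla\cdot$ to pass from the gradient condition to~\eqref{eq:lagged-diffusivity}.
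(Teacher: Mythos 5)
Your proof is correct and follows essentially the same route as the paper's: identify the diffusivity update \eqref{eq:lagged-diffusivity-parameter-update} with the E-step via Lemma~\ref{lem:psi-derivatives}, and identify solving the linear system \eqref{eq:lagged-diffusivity} with the M-step minimization of the expected complete-data log-likelihood. You in fact supply two details the paper leaves implicit, namely the conditional independence of the $z(x)$ given $u$ (so that $\E(z(x)\mid u^{(k)})$ reduces to $\E(z(x)\mid \tfrac12|\nabla u^{(k)}(x)|^2)$) and the discarding of the $u$-independent term $\sum_x \E(v(z(x))\mid u^{(k)})$, both of which are sound.
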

\begin{proof}
By Lemma \ref{lem:psi-derivatives}, the updates of $\xi_{0}^{(k)}$
can be expressed as
\begin{equation*}
\xi_{0}^{(k)}=\psi^{\prime}\left(\frac{1}{2}|\nabla u^{(k)}|^{2}\right)=\E(z\mid u^{(k)}).
\end{equation*}

Moreover, solving \eqref{eq:lagged-diffusivity} is equivalent to
minimizing
\begin{equation*}
\frac{1}{2\sigma^{2}}\|Au-v_{n}\|^{2}+\frac{1}{2}\sum_{x}\xi_{0}^{(k)}(x)|\nabla u(x)|^{2}
\end{equation*}
with respect to $u$. Overall, we obtain $u^{(k+1)}$ from $u^{(k)}$
by minimizing
\begin{equation*}
\frac{1}{2\sigma^{2}}\|Au-v_{n}\|^{2}+\frac{1}{2}\sum_{x}\E(z(x)\mid u^{(k)})|\nabla u(x)|^{2}
\end{equation*}
with respect to $u$, which is just the EM algorithm.\qed
\end{proof}
Lemma \ref{lem:EM-lagged-diffusivity-equivalence} implies that we
can apply the convergence theory for EM to the lagged-diffusivity
algorithm and vice versa.


\subsection{Mean field approximations}
\label{sec:meanfield}
MAP-assignments often yield non-representative samples of the posterior
distribution~\cite{kaipio2005statistical}. A better alternative is given by mean field theory.
Moreover, mean field approximation are often a good compromise
between MAP-assignments that neglect uncertainties in the model and
the conditional mean that can be problematic in a multimodal setting.

In this section, we describe how mean field theory can be applied
to Gaussian scale mixtures. 

The idea is to approximate the complicated distribution $p(u,z \mid v_n)$ by a factorized distribution $q_1(u)q_2(z)$
as good as possible in some sense.
The mean field approximation defines this sense as nearness in the Kullback-Leibler divergence.\footnote{For two probability distributions $p$ and $q$, the Kullback-Leibler divergence is $\KL(q,p) = \int q\log\Big(\tfrac{q}{p}\Big)$.}
Hence, we denote with $p$ the density $p(u,z\mid v_{n})$ and seek distributions $q_{1}$ and $q_{2}$ such that
\begin{equation*}
  \KL(q_{1}q_{2},p)  = \iint q_{1}(u)q_{2}(z)\log\left(\frac{q_{1}(u)q_{2}(z)}{p(u,z | v_n)}\right)\mathrm{d}u\mathrm{d}z
\end{equation*}
is minimized.
We rewrite this term with the help of the entropy $\entropy(q) = -\int q\log q = -\E(\log(q))$ as
\begin{equation*}
\KL(q_{1}q_{2},p)  = -\langle q_{1}q_{2},\log p\rangle - \entropy(q_{1}) - \entropy(q_{2}).
\end{equation*}
Performing the minimization only over probability distributions $q_{1}$, we see that
\[
\int \log(p(u,z | v_n))q_{2}(z)\mathrm{d}z - \log(q_{1}(u)) = \const
\]
and for $q_{2}$ we get
\[
\int \log(p(u,z | v_n))q_{1}(u)\mathrm{d}u - \log(q_{2}(z)) = \const.
\]
Hence, we see, that alternating minimization for $q_{1}$ and $q_{2}$ leads to the updates
\begin{align}
 q_1^{(k+1)}(u) &\propto \exp\Bigl(\int\log(p(u,z| v_n))q_{2}^{(k)}(z)\mathrm{d}z\Bigr) \label{eq:meanfield-u}\\
  & = \exp\left(\E_{q_2^{(k)}(z)} (\log p(u, z| v_n)) \right) \notag\\
  q_2^{(k+1)}(z) &\propto \exp\Bigl(\int\log(p(u,z| v_n))q_{1}^{(k+1)}(u)\mathrm{d}u\Bigr)\label{eq:meanfield-z}\\
  & = \exp\left(\E_{q_1^{(k+1)}(u)} (\log p(u, z | v_n))\right). \notag
\end{align}
Note the resemblance of this procedure to the EM-algorithm.
In contrast to the EM-algorithm, however, the mean field algorithm treats both $u$ and $z$ symmetrically and incorporates the
uncertainty in both of them. The EM-algorithm ignores the uncertainty in $u$.

For Gaussian scale mixtures of the form~\eqref{eq:gsm-posterior} we have
\begin{multline*}
\log p(u,z | v_n) = -\frac{1}{2\sigma^{2}}\|Au-v_{n}\|^{2}  \\
-\sum_{x}\left(\frac{1}{2}z(x)|\nabla u(x)|^{2}+v(z(x))\right) + const 
\end{multline*}
The mean field-updates \eqref{eq:meanfield-u} and \eqref{eq:meanfield-z} now read
\begin{align}
 q_1^{(k+1)}(u) & \propto \exp\Biggl( -\frac{1}{2\sigma^{2}}\|Au-v_{n}\|^{2} \label{eq:meanfield-q1}\\
 & - \sum_{x}\left(\frac{1}{2}\xi_0^{(k)}(x)|\nabla u(x)|^{2})\right)\Biggr) \notag\\
 q_2^{(k+1)}(z) &\propto \exp\Biggl(
 \sum_{x}\left(\frac{1}{2}z(x) \eta_0^{(k+1)}(x) + v(z(x))\right)\Biggr), \label{eq:meanfield-q2}
\end{align}
where
\begin{align*}
\xi_{0}^{(k)}(x) & = \E_{q_2^{(k)}(z)}( z(x) )  \\
\eta_{0}^{(k)}(x) & =\E_{q_1^{(k)}(u)}\left(-\tfrac{1}{2}|\nabla u(x)|^{2}\right).
\end{align*}
Note that $q_1^{(k+1)}$ and $q_2^{(k+1)}$ in \eqref{eq:meanfield-q1} and \eqref{eq:meanfield-q2} can be regarded as (generalized) conditional distributions\footnote{
It is possible $\xi_0^{(k)}$ and $\eta_0^{(k+1)}$ are outside the range of $z$ and $-\frac{1}{2} |\nabla u|^2$ (e.g. if $z$ is a binary random variable). However, formally, $q_1^{(k+1)}$ and $q_2^{(k+1)}$ still behave like the indicated conditional distributions.}
\begin{equation*}
 p\left(u \;\left|\; z = \xi_0^{(k)}\right.\right) \hfill \text{and} \hfill p\left(z \;\left|\; -\frac{1}{2} |\nabla u|^2 = \eta_0^{(k+1)} \right.\right)
\end{equation*}
respectively.
Using Lemma \ref{lem:psi-derivatives}, this shows that
\begin{align}
\xi_0^{(k+1)}(x) & = \E\left( z(x) \mid -\tfrac{1}{2}|\nabla u(x)|^{2} = \eta_0^{(k)}(x)\right) \label{eq:gsm-mean-fieldupdates-xi} \\
& = \psi^\prime(-\eta_0^{(k)}(x)) \notag \\
\eta_0^{(k+1)}(x) & =\E \left(-\tfrac{1}{2}|\nabla u(x)|^{2} \mid z(x) = \xi_0^{(k+1)}(x)\right). \label{eq:gsm-mean-fieldupdates-eta}
\end{align}
The variance of a random vector $v$ is defined as
\begin{equation*}
\Var(v):=\E|v-\E v|^{2}=\E|v|^{2}-|\E v|^{2}.
\end{equation*}
Hence, we can write
\begin{multline*}
\E\left(-\tfrac{1}{2}|\nabla u(x)|^{2}\mid z(x)=\xi_{0}^{(k)}(x)\right) \\
= -\tfrac12\Bigl[|\E\big(\nabla u(x)\mid z(x) = \xi_{0}^{(k)}(x)\big)|^{2}\\
\quad+ \Var\big(\nabla u(x)\mid z(x) = \xi_{0}^{(k)}(x)\big)\Bigr].
\end{multline*}
Setting
\begin{equation*}
 u^{(k)}=\E(u\mid z=\xi_{0}^{(k)}),\quad
 \delta^{(k)}=\Var(\nabla u(x)\mid z=\xi_{0}^{(k)}),
\end{equation*}
we see that equation \eqref{eq:gsm-mean-fieldupdates-eta} can be written as
\begin{equation*}
\eta_{0}^{(k)}=-\tfrac{1}{2}\left(|\nabla u^{(k)}|^{2}+\delta^{(k)}\right).
\end{equation*}
Combining this with \eqref{eq:gsm-mean-fieldupdates-xi}, we see that
we can compute $u^{(k+1)}$ by minimizing
\begin{multline}
\frac{1}{2\sigma^{2}}\|Au-v_{n}\|^{2} \\
+\sum_{x}\psi'\left(\frac{1}{2}\left(|\nabla u^{(k)}(x)|^{2}+\delta^{(k)}(x)\right)\right)|\nabla u(x)|^{2}
\end{multline}
with respect to $u$. We see that mean field approximations to the
joint distribution $p(u,z\mid v_{n})$ as stated in Algorithm~\ref{alg:gsm-meanfield} yield a modified version of
lagged diffusivity, where the modification is given by
\begin{equation*}
\delta^{(k)}(x)=\Var(\nabla u(x)\mid z(x) = \xi_{0}^{(k)}).
\end{equation*}
This modification captures the uncertainty in $u$ that we have
in the model.

Unfortunately, $\delta^{(k)}$ is hard to compute in practice. The
most straightforward way of computing it requires the full covariance
matrix of $u$ which is intractable in high dimensions. Another approach
would be to compute $\delta^{(k)}$ using a Monte Carlo approach.
While this only requires sampling from a Gaussian distribution and
can therefore be performed using perturbations sampling \cite{papandreou2010gaussian}, Monte Carlo
methods are slow to converge and we have to solve a linear equation
several times in each iteration.

We therefore propose an approximate procedure to compute the $\delta^{(k)}$
which is based on a relaxation of the mean-field optimization problem.

\begin{algorithm}
\noindent \begin{centering}
\begin{algorithmic}[1]
    \Procedure{gsm\_meanfield}{$v_n$}
        \While{not converged}
            \For{$x \in \Omega$}
                \State $\delta(x) \gets
                    \textsc{compute\_delta}(\xi_0)$
                \State $\xi_0(x) \gets
                    \psi^\prime\left(\tfrac{1}{2}\left(|\nabla u(x)|^2 + \delta(x)\right)\right)$
            \EndFor

            \State $u \gets
               \argmin_{u'}
               \frac{1}{2\sigma^2}\|Au' - v_n\|^2
               + \frac{1}{2}\sum_x\xi_0(x) |\nabla u'(x)|^2$

        \EndWhile
        \State \textbf{return} $u$
    \EndProcedure
\end{algorithmic}

\par\end{centering}

\caption{Mean field algorithm for Gaussian scale mixture.\label{alg:gsm-meanfield}}
\end{algorithm}

In the following we derive and explicit representation of the mean field objective in the general case of distributions that are exponential pairs. 
Recall, that two random variables $u$ and $z$ form an exponential pair if their distribution can be written as
\begin{equation}
  p(u,z) = \exp\left(\langle s(u), r(z) \rangle \right).
\end{equation}
We define the functions
\begin{align}
  H(\xi) & := \log \int \exp\left( \langle s(u), \xi \rangle \right) \mu(\mathrm d u) \\
  G(\eta) & := \log \int \exp\left( \langle \eta, r(z) \rangle \right) \nu(\mathrm d z)
\end{align}
which are closely related to the so-called log-partition function from the theory of exponential families
(cf.~\cite[Section 9.2]{murphy2012machine}).
These functions allow us to express the marginals and conditional densities of $p$ as
\begin{align*}
  p(u) & = \int p(u,z)\nu(\mathrm{d}z) = \int \exp(\langle s(u),r(z)\rangle)\nu(\mathrm{d}z)\\
  & = \exp(G(s(u))).\\
  p(z) & = \int p(u,z)\mu(\mathrm{d}u) = \int \exp(\langle s(u),r(z)\rangle)\mu(\mathrm{d}u)\\
  & = \exp(H(r(z)))
\end{align*}
and
\begin{align*}
  p(u\mid z) & = \tfrac{p(u,z)}{p(z)} = \exp\Big(\langle s(u),r(z)\rangle - H(r(z))\Big)\\
  p(z\mid u) & = \tfrac{p(u,z)}{p(u)} = \exp\Big(\langle s(u),r(z)\rangle - G(s(u))\Big),
\end{align*}
respectively. 
Here the connection to the exponential family can be seen clearly: Both $p(u\mid z)$ and $p(z\mid u)$ are in the exponential family, the vector $s(u)$ contains the sufficient statistics for $u$, $r(z)$ is the respective parameter vector and vice versa for $z$. 
Note moreover that the density $p(u\mid z)$ is completely determined by the value $r(z)$and $p(z\mid u)$ is determined by $s(u)$ (and not by $z$ and $u$), respectively. Hence, we form
\begin{align*}
  p(u\mid \xi) & := \exp\Big(\langle s(u),\xi\rangle - H(\xi)\Big)\\
  p(z\mid \eta) & := \exp\Big(\langle \eta,r(z)\rangle - G(\eta)\Big),
\end{align*}
Similarly as for the case of the log-partition function one can show that both $G$ and $H$ are convex functions (one shows that the Hessians of $G$ and $H$ are the covariance matrices of the respective sufficient statistics in the same way as done in~\cite[Section 9.2.3]{murphy2012machine}).
Hence, we can consider their convex conjugates, i.e.
\begin{align*}
  H^{*}(\eta) &= \sup_{\xi}\langle \xi,\eta\rangle - H(\xi)\\
  G^{*}(\xi) &= \sup_{\eta}\langle \eta,\xi\rangle - G(\eta).
\end{align*}
We use these descriptions to derive a mean field approximation to $p(u,z)$. Indeed we have the following theorem:

\begin{theorem}
\label{thm:exppair-meanfield}
The naive mean field approximation to
$p$ is given by $q_{1}(u\mid\xi)q_{2}(z\mid\eta)$ where
\begin{align}
q_{1}(u\mid\xi) & =\mathrm{e}^{\langle s(u),\xi\rangle-H(\xi)}\label{eq:exppair-meanfield-sol-1}\\
q_{2}(z\mid\eta) & =\mathrm{e}^{\langle\eta,r(z)\rangle-G(\eta)}\label{eq:exppair-meanfield-sol-2}
\end{align}
The Kullback-Leibler divergence of $q_{1}(u\mid\xi)q_{2}(z\mid\eta)$ and $p(u,z)$ has the following explicit form
\begin{equation}
\KL(q_{1}(u\mid\xi)q_{2}(z\mid\eta),p(u,z))=H^{*}(\tilde{\xi})+G^{*}(\tilde{\eta})-\langle\tilde{\xi},\tilde{\eta}\rangle,\label{eq:KL-joint}
\end{equation}
where $\tilde{\xi}=\nabla H(\xi)$ and $\tilde{\eta}=\nabla G(\eta)$.
A point $(\xi,\eta)$ is a stationary point of the mean field objective
in \eqref{eq:KL-joint}, iff it satisfies
\[
\xi=\nabla G(\eta)\quad\text{and}\quad\eta=\nabla H(\xi).
\]
\end{theorem}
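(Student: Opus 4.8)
The plan is to establish the three assertions in turn, combining the general mean field updates \eqref{eq:meanfield-u}--\eqref{eq:meanfield-z} with two standard exponential-family facts about $H$ and $G$. For the form of the optimal factors, I would fix $q_2$ and carry out the coordinate minimization \eqref{eq:meanfield-u}. Since $\log p(u,z)=\langle s(u),r(z)\rangle$ for an exponential pair, the inner expectation $\int\log p(u,z)\,q_2(z)\,\nu(\mathrm dz)=\langle s(u),\E_{q_2}(r(z))\rangle$ is linear in $s(u)$. Writing $\xi=\E_{q_2}(r(z))$ and normalizing, the optimal $q_1$ is exactly $\exp(\langle s(u),\xi\rangle-H(\xi))=q_1(u\mid\xi)$, because $H(\xi)$ is precisely the normalizing log-constant. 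The symmetric computation for \eqref{eq:meanfield-z} produces $q_2(z\mid\eta)$ with $\eta=\E_{q_1}(s(u))$. Hence every coordinate update stays within the families \eqref{eq:exppair-meanfield-sol-1}--\eqref{eq:exppair-meanfield-sol-2}, so the naive mean field approximation has the claimed product form and is parametrized by $(\xi,\eta)$.

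For the explicit divergence formula I would substitute the three log-densities $\log q_1(u\mid\xi)=\langle s(u),\xi\rangle-H(\xi)$, $\log q_2(z\mid\eta)=\langle\eta,r(z)\rangle-G(\eta)$ and $\log p(u,z)=\langle s(u),r(z)\rangle$ into $\KL(q_1q_2,p)=\E_{q_1q_2}\log\tfrac{q_1q_2}{p}$ and push the expectation through the (bi)linear terms. The one nonroutine ingredient is the moment identity $\E_{q_1(u\mid\xi)}(s(u))=\nabla H(\xi)=:\tilde\xi$ and $\E_{q_2(z\mid\eta)}(r(z))=\nabla G(\eta)=:\tilde\eta$, obtained by differentiating the definitions of $H$ and $G$ under the integral sign, which is the same computation that underlies Lemma~\ref{lem:psi-derivatives}. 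This collapses the expectation to $\langle\tilde\xi,\xi\rangle-H(\xi)+\langle\eta,\tilde\eta\rangle-G(\eta)-\langle\tilde\xi,\tilde\eta\rangle$. Finally, because $\tilde\xi=\nabla H(\xi)$, the supremum defining $H^*$ is attained at $\xi$, so the Fenchel equality $H^*(\tilde\xi)=\langle\xi,\tilde\xi\rangle-H(\xi)$ holds, and likewise $G^*(\tilde\eta)=\langle\eta,\tilde\eta\rangle-G(\eta)$; substituting these yields \eqref{eq:KL-joint}.

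For the stationarity conditions I would differentiate the collapsed objective $\langle\nabla H(\xi),\xi\rangle-H(\xi)+\langle\eta,\nabla G(\eta)\rangle-G(\eta)-\langle\nabla H(\xi),\nabla G(\eta)\rangle$ with respect to $\xi$ and $\eta$. Using the product rule and the symmetry of the Hessian, the $\xi$-gradient simplifies to $\nabla^2H(\xi)\,(\xi-\nabla G(\eta))$ and, symmetrically, the $\eta$-gradient to $\nabla^2G(\eta)\,(\eta-\nabla H(\xi))$. Since $\nabla^2H$ and $\nabla^2G$ are the covariance matrices of the sufficient statistics, as noted just before the theorem, they are nondegenerate on the relevant parameter set, so the gradients vanish exactly when $\xi=\nabla G(\eta)$ and $\eta=\nabla H(\xi)$, which is the claimed characterization and coincides with the fixed point of the coordinate updates from the first step. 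I expect the main obstacle to be the technical care needed to justify differentiating under the integral sign for the moment identities and the smoothness and nondegeneracy of $H$ and $G$, together with ensuring the Fenchel equality (rather than mere inequality) by checking that $\tilde\xi,\tilde\eta$ lie in the ranges of $\nabla H$ and $\nabla G$; the remainder is bookkeeping.
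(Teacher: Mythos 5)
Your proposal is correct, and its skeleton matches the paper's proof: both identify the optimal factors from the coordinate updates \eqref{eq:meanfield-u}--\eqref{eq:meanfield-z} using linearity of $\log p(u,z)=\langle s(u),r(z)\rangle$ in the sufficient statistics, and both reduce the divergence to cross-entropy minus entropies, with the cross-entropy collapsing to $-\langle\tilde\xi,\tilde\eta\rangle$ via the moment identities $\E_{q_1}(s(u))=\nabla H(\xi)$, $\E_{q_2}(r(z))=\nabla G(\eta)$. You differ in two places, both defensibly. First, where the paper simply cites $\entropy(q_1)=-H^*(\tilde\xi)$ and $\entropy(q_2)=-G^*(\tilde\eta)$ from the exponential-family literature, you rederive this identity from the Fenchel--Young equality $H^*(\nabla H(\xi))=\langle\xi,\nabla H(\xi)\rangle-H(\xi)$; this makes the argument self-contained, and your worry about equality versus inequality is automatically resolved since $\tilde\xi$ lies in the range of $\nabla H$ by construction. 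Second, for the stationarity characterization the paper reads the conditions $\xi=\nabla G(\eta)$, $\eta=\nabla H(\xi)$ off the self-consistency of the mean-field coordinate updates, whereas you differentiate the collapsed objective directly and obtain gradients $\nabla^2 H(\xi)\,(\xi-\nabla G(\eta))$ and $\nabla^2 G(\eta)\,(\eta-\nabla H(\xi))$; this computation is correct and gives a cleaner proof of the ``iff,'' but note that it buys this at the price of an assumption the paper's route does not need explicitly: the Hessians, being covariance matrices of the sufficient statistics, are only guaranteed positive \emph{semi}definite, so you must assume the exponential pair is minimal (no affine dependence among the components of $s$ and $r$) to invert them. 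You flag this yourself, so the gap is acknowledged rather than hidden; it is the standard regularity assumption in this setting.
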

\begin{proof}
The proof uses the close relationship between exponential pairs and exponential families.
In particular, we are going to use that for the conjugates $H^{*}$ and $G^{*}$ we have the description
$H^{*}(\eta) = -\entropy(q_{1})$ and $G^{*}(\eta) = -\entropy(q_{2})$, cf.~\cite[Section 3.6]{wainwright2008graphical}.

A mean field
approximation of the form $\tilde{q}_{1}(u)\tilde{q}_{2}(z)$
to a distribution $p$ satisfies
\[
\tilde{q}_{1}(u)\propto\exp\left(\E(\log p(u,z)\mid\theta)\right)=\exp\left(\langle s(u),\E(r(\theta)\mid u)\rangle\right).
\]
This yields 
\[
\tilde{q}_{1}(u)=\mathrm{e}^{\langle\xi,s(u)\rangle-H(\xi)}
\]
with $\xi=\E(r(z)\mid u)$ and similarly for $\tilde{q}_{2}(z)$.
This also shows
\[
\begin{split}
  \xi &=\E(r(z)\mid u) =\nabla
  G(\eta)\quad\text{and}\\
  \eta&=\E(s(u)\mid\xi)=\nabla H(\xi).
\end{split}
\]

For any densities $q_{1}$ and $q_{2}$ like in \eqref{eq:exppair-meanfield-sol-1}and \eqref{eq:exppair-meanfield-sol-2} we calculate $\KL(q_{1}(u\mid\xi)q_{2}(z\mid\eta),p(u,z\mid v_{n}))$ as
\[
\entropy(q_{1}q_{2},p)-\entropy(q_{1})-\entropy(q_{2})=-\langle\tilde{\xi},\tilde{\eta}\rangle+H^{*}(\tilde{\xi})+G^{*}(\tilde{\eta}),
\]
because 
\[
\entropy(q_{1}q_{2},p)=\E_{q_{1}q_{2}}(-\log p(u,z))=-\langle\tilde{\xi},\tilde{\eta}\rangle,
\]
as well as $\entropy(q_{1})=-H^{*}(\tilde{\xi})$ and $\entropy(q_{2})=-G^{*}(\tilde{\eta})$. \qed
\end{proof}

Using $s=(s_{0},h,1)$ and $r=(r_{0},1,g)$ the mean field objective~\eqref{eq:KL-joint} is equivalent to
\begin{equation*}
\min_{\xi_{0},\xi_{1}}\min_{\eta_{0}\eta_{1}}H_{0}^{*}(\eta_{0},\eta_{2})+G_{0}^{*}(\xi_{0},\xi_{1})-\langle\eta_{0},\xi_{0}\rangle-\xi_{1}-\eta_{2}
\end{equation*}
with $H_{0}$ and $G_{0}$ defined as
\begin{align}
H_{0}(\xi_{0},\xi_{1}) & :=\log\int\mathrm{e}^{\langle s_{0}(u),\xi_{0}\rangle+\xi_{1}h(u)}\mu(\mathrm{d}u).\label{eq:exppair-conjugate-0}\\
G_{0}(\eta_{0},\eta_{2}) & :=\log\int\mathrm{e}^{\langle\eta_{0},r_{0}(z)\rangle+\eta_{2}g(z)}\nu(\mathrm{d}z).
\end{align}

The mean field problem
can be written in the alternative form
\begin{equation}\label{eq:min-G*-H}
\min_{\xi}G^{*}(\xi)-H(\xi).
\end{equation}

Using $s=(s_{0},h,1)$ and $r=(r_{0},1,g)$, this can be expressed
as
\begin{equation*}
\min_{\xi_{0},\xi_{1}}G_0^{*}(\xi_{0},\xi_{1})-H_0(\xi_{0},1)-\xi_{1}.
\end{equation*}

Now we apply the previous findings to the case where $s_{0}, r_{0}, g$ and $h$ are given by~\eqref{eq:suffstatt-s0-r0} and~\eqref{eq:suffstat-h-g} and derive an explicit description for $H_{0}(\xi_{0},1)$.
\begin{lemma}\label{lem:gsm-H-G}
We define the linear mapping $\Lambda(\xi_{0})$ by
\[
\Lambda(\xi_{0})u = \tfrac1{\sigma^{2}}A'Au - \nabla\cdot(\xi_{0}\nabla u)
\]
and set $m := \tfrac1{\sigma^{2}}A'v_{n}$. Then $\Lambda(\xi_{0})$
is positive definite if $A\mathbf{1}\neq 0$ and $\xi_{0}>0$. Furthermore it holds that
$H_{0}(\xi_{0},1)$ is given by
\begin{equation}
H_{0}(\xi_{0},1)=-\frac{1}{2}\log\det\Lambda(\xi_{0})+\frac{1}{2}\langle m,\Lambda(\xi_{0})^{-1}m\rangle+\const.\label{eq:gsm-H}
\end{equation}
\end{lemma}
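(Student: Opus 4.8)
The plan is to treat $H_0(\xi_0,1)$ as the logarithm of a Gaussian integral over $u$ and evaluate it in closed form. First I would substitute the definitions of $s_0$, $h$ and $\mu$ from \eqref{eq:suffstatt-s0-r0} and \eqref{eq:suffstat-h-g} into \eqref{eq:exppair-conjugate-0}, which gives
\[
H_0(\xi_0,1) = \log\int\exp\Bigl(-\tfrac1{2\sigma^2}\|Au-v_n\|^2 - \tfrac12\sum_x\xi_0(x)|\nabla u(x)|^2\Bigr)\mathrm{d}u.
\]
The integrand is the exponential of a quadratic form in $u$, so the whole computation reduces to recognizing that form and applying the standard multivariate Gaussian integral.

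The key algebraic step is to rewrite the exponent as $-\tfrac12\langle u,\Lambda(\xi_0)u\rangle + \langle m,u\rangle + \const$. Expanding $\|Au-v_n\|^2$ produces the quadratic term $\tfrac1{\sigma^2}\langle A'Au,u\rangle$, the linear term $\langle \tfrac1{\sigma^2}A'v_n,u\rangle = \langle m,u\rangle$, and a $u$-independent remainder $-\tfrac1{2\sigma^2}\|v_n\|^2$. For the gradient penalty I would invoke the assumed adjointness of $\nabla$ and $-\nabla\cdot$ to write $\sum_x\xi_0(x)|\nabla u(x)|^2 = \langle \xi_0\nabla u,\nabla u\rangle = \langle u,-\nabla\cdot(\xi_0\nabla u)\rangle$. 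Collecting the quadratic terms yields exactly the operator $\Lambda(\xi_0)u = \tfrac1{\sigma^2}A'Au - \nabla\cdot(\xi_0\nabla u)$, which is self-adjoint since both $A'A$ and $u\mapsto-\nabla\cdot(\xi_0\nabla u)$ are. Applying $\int\exp(-\tfrac12\langle u,\Lambda u\rangle+\langle m,u\rangle)\,\mathrm{d}u = (2\pi)^{n/2}(\det\Lambda)^{-1/2}\exp(\tfrac12\langle m,\Lambda^{-1}m\rangle)$ and taking logarithms then gives \eqref{eq:gsm-H}, with the additive constant absorbing $\tfrac n2\log(2\pi)$ and the term $-\tfrac1{2\sigma^2}\|v_n\|^2$.

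The one point that genuinely needs the hypotheses is the positive definiteness of $\Lambda(\xi_0)$, which is also what guarantees convergence of the Gaussian integral. I would compute $\langle u,\Lambda(\xi_0)u\rangle = \tfrac1{\sigma^2}\|Au\|^2 + \sum_x\xi_0(x)|\nabla u(x)|^2 \geq 0$, both summands being nonnegative when $\xi_0>0$. Equality forces $\nabla u=0$ everywhere, so $u$ is a constant image $c\mathbf{1}$, and simultaneously $Au=0$; then $cA\mathbf{1}=0$ together with $A\mathbf{1}\neq0$ forces $c=0$, hence $u=0$. This establishes the first claim and licenses the Gaussian integral used for the second.

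I expect no serious obstacle: the argument is essentially completing the square followed by the Gaussian integral formula. The only subtlety worth double-checking is the characterization of the kernel of the discrete gradient — that $\nabla u=0$ forces $u$ to be a constant image under the chosen boundary conditions — since positive definiteness, and hence the very finiteness of $H_0(\xi_0,1)$, relies on this kernel being exactly the constants, which the hypothesis $A\mathbf{1}\neq0$ then eliminates.
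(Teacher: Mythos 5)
Your proposal is correct and takes essentially the same route as the paper: write $\exp\bigl(H_0(\xi_0,1)\bigr)$ as a Gaussian integral, complete the square to identify $\Lambda(\xi_0)$ and $m$, and invoke the standard Gaussian normalization constant $(2\pi)^{N/2}(\det\Lambda(\xi_0))^{-1/2}\exp\bigl(\tfrac12\langle m,\Lambda(\xi_0)^{-1}m\rangle\bigr)$ before taking logarithms. If anything, your treatment of positive definiteness is more complete than the paper's, which merely asserts that $\tfrac1{\sigma^2}\|Au\|^2+\sum_x\xi_0(x)|\nabla u(x)|^2$ is positive for nonzero $u$, whereas you supply the kernel argument ($\nabla u=0$ forces $u=c\mathbf{1}$, and $A\mathbf{1}\neq 0$ then forces $c=0$) that actually uses both hypotheses.
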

\begin{proof}
Definiteness of $\Lambda$ follows by
\[
\langle u,\Lambda(\xi_{0})u\rangle = \tfrac1{\sigma^{2}}\norm{Au}^{2}+ \norm{\xi_{0}|\nabla u|}^{2}
\]
which it positive for non-zero $u$.

By~\eqref{eq:exppair-conjugate-0} we have
\begin{multline}
\label{eq:H-integral}
\exp\big(H_{0}(\xi_{0},1)\big) =
\int\exp\Biggl(-\frac{1}{2\sigma^{2}}\|Au-v_{n}\|^{2} \\
-\frac{1}{2}\sum_{x}\xi_{0}(x)|\nabla u(x)|^{2}\Biggr)\mathrm{d}u.
\end{multline}
A straightforward calculation shows that the integrand is proportional
to
\begin{multline}
\label{eq:H-integrand}
\exp\Biggl(-\frac{1}{2}\langle u-\Lambda(\xi_{0})^{-1}m,\Lambda(\xi_{0})(u-\Lambda(\xi_{0})^{-1}m)\rangle \\
+ \frac{1}{2}\langle m,\Lambda(\xi_{0})^{-1}m\rangle\Biggr).
\end{multline}
Hence, the integral on the right in~\eqref{eq:H-integral} is a Gaussian distribution and 
by a standard result about the normalization constant of a Gaussian
distribution, we see that \eqref{eq:H-integral} is proportional to
\begin{equation*}
\sqrt{\frac{(2\pi)^{N}}{\det|\Lambda(\xi_{0})|}}\mathrm{e}^{\frac{1}{2}\langle m,\Lambda(\xi_{0})^{-1}m\rangle}.
\end{equation*}
Taking the logarithm of this, results in equation \eqref{eq:gsm-H}.\qed
\end{proof}

Instead of computing the convex conjugate of $H_{0}$, we compute
the convex conjugate over the two terms in \eqref{eq:gsm-H} separately.
Recall that for $\Lambda$ positive definite
\begin{equation*}
-\frac{1}{2}\log\det\Lambda=\max_{C}-\frac{1}{2}\langle C,\Lambda\rangle+\frac{1}{2}\log\det C+\frac{N}{2}
\end{equation*}
where $C$ ranges over the set of positive semidefinite $N\times N$ matrices and $N$ the
total number of pixels. This shows that
\begin{multline}
H_{0}(\xi_{0},1)
=\max_{C,\eta_{0}}-\frac{1}{2}\langle C,\Lambda(\xi_{0})\rangle+\frac{1}{2}\log\det C+\langle\eta_{0},\xi_{0}\rangle \\
-F^{*}(\eta_{0})+\const.
\end{multline}
where
\begin{equation*}
F(\xi_{0})=\frac{1}{2}\langle m,\Lambda(\xi_{0})^{-1}m\rangle.
\end{equation*}
This turns the minimization problem~\eqref{eq:min-G*-H} into
\begin{multline}
\label{eq:mean-field-optimization-reformulated}
\min_{\xi_{0},\xi_{1}}\min_{C}\min_{\eta_{0}}
G_0^{*}(\xi_{0},\xi_{1})+F^{*}(\eta_{0}) \\
-\frac{1}{2}\log\det C+\frac{1}{2}\langle C,\Lambda(\xi_{0})\rangle-\langle\eta_{0},\xi_{0}\rangle-\xi_{1}.
\end{multline}
In principle, this optimization problem can again be solved by coordinate
descent. However, computation of $C$ is intractable, as it represents
a $N\times N$ matrix. We therefore replace the set of allowable $C$
by a lower dimensional set for which we can explicitly compute the
determinant. One such set is given by the set of diagonal matrices.
Even though we will no longer find a local optimum, we still minimize
an upper bound to \eqref{eq:mean-field-optimization-reformulated}.
We denote the diagonal entries of $C$ by $(c(y))_{y\in\Omega}$ and for some pixel $y\in\Omega$ we denote by $\delta_{y}$ the image which is one only in pixel $y$ and zero elsewhere.
Then the optimization problem in \eqref{eq:mean-field-optimization-reformulated}
becomes
\begin{multline}
\label{eq:relaxed-mean-field}
\min_{\xi_{0},\xi_{1}}\min_{c}\min_{\eta_{0}}
G_0^{*}(\xi_{0},\xi_{1})+F^{*}(\eta_{0}) \\
-\frac{1}{2}\sum_{y}\log c(y)+\frac{1}{2}\sum_{y}c(y)\langle \delta_{y},\Lambda(\xi_{0})\delta_{y}\rangle 
-\langle\eta_{0},\xi_{0}\rangle-\xi_{1}.
\end{multline}

Minimization of \eqref{eq:relaxed-mean-field} with respect to $\xi_{0}$
and $\xi_{1}$ yields
\begin{equation*}
\left(\eta_{0}-\frac{1}{2}\sum_{y}c(y)|\nabla\delta_{y}(x)|^{2},1\right)\in\partial G_0^{*}(\xi_{0},\xi_{1}).
\end{equation*}
Here, we used the fact that
\begin{equation*}
  \frac{\partial}{\partial\xi_{0}(x)}\Lambda(\xi_{0}) u(x)= -\nabla\cdot\nabla u(x) = -\Delta u(x).
\end{equation*}
and hence
\begin{align*}
  \frac{\partial}{\partial \xi_{0}(x)}\Big(\frac{1}{2}\sum_{y}c(y)\langle \delta_{y},\Lambda(\xi_{0})\delta_{y}\rangle\Big) &= \frac12\sum_{y}\langle \delta_{y},-\Delta \delta_{y}\rangle\\
 & = \frac12\sum_{y}|\nabla \delta_{y}|^{2}.
\end{align*}
Consequently, by subgradient inversion
\begin{align*}
\xi_{0} & =\nabla_{\eta_{0}}G_{0}\left(\eta_{0}-\frac{1}{2}\sum_{y}c(y)|\nabla\delta_{y}(x)|^{2},1\right)\\
\xi_{1} & =\partial_{\eta_{1}}G_{0}\left(\eta_{0}-\frac{1}{2}\sum_{y}c(y)|\nabla\delta_{y}(x)|^{2},1\right).
\end{align*}
Similarly, we obtain
\begin{align*}
\eta_{0} & =\nabla F(\xi_{0})=-\frac{1}{2}\left|\nabla\left(\Lambda^{-1}\left(\xi_{0}\right)m\right)\right|{}^{2}\\
c(x) & =\frac{1}{\langle \delta_{x},\Lambda(\xi_{0})\delta_{x}\rangle}=\frac{1}{\frac{1}{\sigma^{2}}|A\delta_{x}|^{2}+\sum_{y}\xi_{0}(y)|\nabla\delta_{x}(y)|^{2}}.
\end{align*}

Overall, applying coordinate descent yields
\begin{align*}
\xi_{0}^{(k+1)}(x) & =\psi^{\prime}\left(-\eta_{0}^{(k)}(x)+\frac{1}{2}\sum_{y}c^{(k)}(y)|\nabla\delta_{y}(x)|^{2}\right)\\
\eta_{0}^{(k+1)}(x) & =-\frac{1}{2}\left|\nabla\left(\Lambda^{-1}\left(\xi_{0}^{(k+1)}\right)m\right)(x)\right|{}^{2}\\
c^{(k+1)}(x) & =\frac{1}{\frac{1}{\sigma^{2}}|A\delta_{x}|^{2}+\sum_{y}\xi_{0}^{(k+1)}(y)|\nabla\delta_{x}(y)|^{2}}.
\end{align*}

The $\xi_{1}$-updates are given by
\begin{equation*}
\xi_{1}^{(k+1)}=\partial_{\eta_{1}}G_0(\eta_{0}^{(k+1)},1).
\end{equation*}
However, as the other updates do not depend on $\xi_{1}$, we can
leave them out.

A visualization of $|\nabla\delta_{x}(y)|^{2}$ as a function of $x$
and as a function of $y$ is shown in Figure \ref{fig:deltaxy}.

\begin{figure}
\noindent \begin{centering}
\subfloat[As a function of $x$]{\noindent \begin{centering}
\begin{tikzpicture}[scale=.8]

	\fill [gray!60!white](1,1) rectangle (2,2);
	\draw (1.5,1.5) node  {$+2$};

	\fill [gray!20!white](1,2) rectangle (2,3);
	\draw (1.5,2.5) node  {$+1$};

	\fill [gray!20!white](2,1) rectangle (3,2);
	\draw (2.5,1.5) node  {$+1$};
	
	\draw[step=1cm, black!50!gray, thin] (.5,.5) grid (3.5,3.5);
	\draw[->] (0.3,0.3) -- (3.8,0.3) node[below right] {$x_1$};
	\draw[->] (0.3,0.3) -- (0.3,3.8) node[above left] {$x_2$};;

\end{tikzpicture}
\par\end{centering}

}\subfloat[As a function of $y$]{\noindent \begin{centering}
\begin{tikzpicture}[scale=.8]

	\fill [gray!60!white](2,2) rectangle (3,3);
	\draw (2.5,2.5) node  {$+2$};

	\fill [gray!20!white](1,2) rectangle (2,3);
	\draw (1.5,2.5) node  {$+1$};

	\fill [gray!20!white](2,1) rectangle (3,2);
	\draw (2.5,1.5) node  {$+1$};
	
	\draw[step=1cm, black!50!gray, thin] (.5,.5) grid (3.5,3.5);
	\draw[->] (0.3,0.3) -- (3.8,0.3) node[below right] {$y_1$};
	\draw[->] (0.3,0.3) -- (0.3,3.8) node[above left] {$y_2$};;

\end{tikzpicture}
\par\end{centering}

}
\par\end{centering}

\caption[Visualization of $|\nabla\delta_{x}(y)|^{2}$]{Visualization of $|\nabla\delta_{x}(y)|^{2}$ as a function of $x$
and as a function of $y$. The other variable is located at the dark
gray square.\label{fig:deltaxy}}

\end{figure}
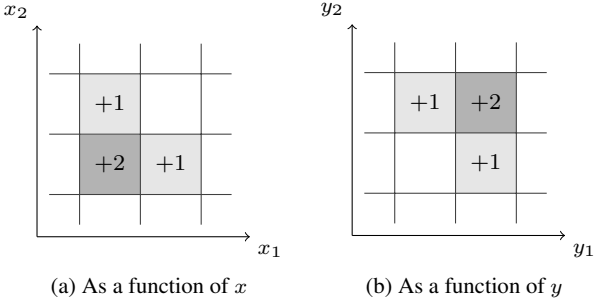

Note that $\Lambda^{-1}(\xi_{0}^{(k)})m$ is just the mean vector
$u^{(k)}$ of $p(u\mid\xi_{0}^{(k)},v_{n})$. Because $p(u\mid\xi_{0}^{(k)},v_{n})$ is a Gaussian distribution,
it is also the MAP-assignment. We therefore reobtain
Algorithm \ref{alg:gsm-meanfield} with
\begin{equation*}
\delta^{(k)}(x)=\sum_{y}c^{(k)}(y)|\nabla \delta_{y}(x)|^{2},
\end{equation*}
where
\begin{equation*}
 c^{(k)}(x)=\frac{1}{\frac{1}{\sigma^{2}}|A\delta_{x}|^{2}+\sum_{y}\xi_{0}^{(k)}(y)|\nabla\delta_{x}(y)|^{2}}.
\end{equation*}
The full algorithm is stated in Algorithm~\ref{alg:gsm-meanfield-approx}.

\begin{algorithm}
\noindent \begin{centering}
\begin{algorithmic}[1]
    \Procedure{gsm\_meanfield}{$v_n$}
        \State Initialize $c=0$ and $u=v_n$
        \While{not converged}
            \State $\delta(x) \gets
                \sum_{y}c(y)|\nabla\delta_{y}(x)|^{2}$
            \State $\xi_{0}(x) \gets \psi'\left(\tfrac12|\nabla u(x)|^{2} + \delta(x)\right)$
            \State $u \gets
               \argmin_{u'}
               \frac{1}{2\sigma^2}\|Au' - v_n\|^2
               + \frac{1}{2}\sum_x\xi_0(x) |\nabla u'(x)|^2$ 
	    \State $c(x)\gets \frac{1}{\tfrac1{\sigma^{2}}|A\delta_{x}|^{2} + \sum_{y}\xi_{0}(y)|\nabla\delta_{x}(y)|^{2}}$
        \EndWhile
        \State \textbf{return} $u$
    \EndProcedure
\end{algorithmic}


\par\end{centering}

\caption{Approximate mean field algorithm for Gaussian scale mixture.\label{alg:gsm-meanfield-approx}}
\end{algorithm}

Moreover, the $c^{(k)}$ can be interpreted as marginal variances.


\subsection{Sampling}
\label{sec:sampling}
As Gaussian scale mixtures are a special case of exponential pairs as defined in~(\ref{eq:exponential-pair}),
we can apply the efficient blockwise Gibbs sampler~\cite{geman1984stochastic}.
To this end, we need the conditional densities $p(u\mid z=\xi_{0},v_{n})$
and $p\left(z\mid-\tfrac{1}{2}|\nabla u|^{2}=\eta_{0},v_{n}\right)$.
\begin{lemma}
The conditional densities of $p$ in \eqref{eq:gsm-posterior} are
given by
\begin{align*}
p(u\mid z=\xi_{0})
& \propto\exp\Bigl(-\frac{1}{2\sigma^{2}}\|Au-v_{n}\|^{2} \\
& \qquad -\frac{1}{2}\sum_{x}\xi_{0}(x)|\nabla u(x)|^{2}\Bigr) \\
p\left(z\mid-\tfrac{1}{2}|\nabla u|^{2}=\eta_{0}\right)
& \propto\prod_{x}\exp\left(z(x)\eta_{0}(x)-v(z(x))\right).
\end{align*}
\end{lemma}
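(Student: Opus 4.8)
The plan is to read off both conditional densities directly from the joint posterior \eqref{eq:gsm-posterior} by fixing the conditioned quantity and collecting every factor that does not depend on the free variable into the (unspecified) normalization constant, which is exactly what the $\propto$ symbol permits. No nontrivial computation is required beyond this bookkeeping.

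For the first identity I would start from \eqref{eq:gsm-posterior} and substitute $z(x)=\xi_{0}(x)$. The only term that thereby becomes independent of $u$ is $\sum_{x}v(z(x))=\sum_{x}v(\xi_{0}(x))$; absorbing it into the proportionality constant leaves precisely $\exp\big(-\tfrac{1}{2\sigma^{2}}\|Au-v_{n}\|^{2}-\tfrac12\sum_{x}\xi_{0}(x)|\nabla u(x)|^{2}\big)$, as claimed.

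For the second identity the key observation is that, viewed as a function of $z$, the joint density depends on $u$ only through the sufficient statistic $s_{0}(u)=(-\tfrac12|\nabla u(x)|^{2})_{x\in\Omega}$ from \eqref{eq:suffstatt-s0-r0}. Indeed, the coupling term rewrites as $-\tfrac12\sum_{x}z(x)|\nabla u(x)|^{2}=\sum_{x}z(x)\cdot\big(-\tfrac12|\nabla u(x)|^{2}\big)$, while the remaining $u$-dependence sits entirely in $h(u)=-\tfrac1{2\sigma^{2}}\|Au-v_{n}\|^{2}$, which is constant in $z$. Conditioning on $-\tfrac12|\nabla u|^{2}=\eta_{0}$ therefore fixes this statistic to $\eta_{0}$ and turns $h(u)$ into part of the normalization constant; substituting yields $\exp\big(\sum_{x}z(x)\eta_{0}(x)-\sum_{x}v(z(x))\big)$, which factorizes over the pixels into the stated product.

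The only genuine subtlety, and the step I would state explicitly, is that the second density conditions on a non-invertible statistic of $u$ rather than on $u$ itself. What makes this legitimate is precisely the exponential-pair structure \eqref{eq:exponential-pair}: because the interaction between $u$ and $z$ enters only through $s_{0}(u)$, the conditional law $p(z\mid u)$ is a function of $s_{0}(u)$ alone, so conditioning on the value $-\tfrac12|\nabla u|^{2}=\eta_{0}$ is well defined and produces the claimed exponential-family density.
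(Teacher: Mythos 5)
Your proposal is correct and matches the paper's reasoning: the paper states this lemma without any proof, treating it as immediate from reading off the joint density \eqref{eq:gsm-posterior}, which is exactly your bookkeeping argument. Your explicit handling of the subtlety — that conditioning on the statistic $-\tfrac12|\nabla u|^2=\eta_0$ is legitimate because $p(z\mid u)$ depends on $u$ only through $s_0(u)$ — is precisely the justification the paper gives implicitly via its exponential-pair discussion in Section~\ref{sec:meanfield}, where $p(z\mid u)=\exp\bigl(\langle s(u),r(z)\rangle-G(s(u))\bigr)$ is shown to be determined by $s(u)$ alone.
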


In particular, we see that $p(u\mid\xi_{0})$ is a Gaussian distribution
and $p\left(z\mid\eta_{0}\right)$ factors over the pixels $x$. This allows us to use
perturbation sampling \cite{papandreou2010gaussian}
to sample from $u$ given $z$. To sample $z$ given $u$, we can
just sample every component of $z$ individually.

Overall, we obtain Algorithm \ref{alg:gsm-sample} to sample from
a Gaussian scale mixture.
 Note that solving the optimization problem
\begin{equation*}
\arg\min_{u}\frac{1}{2\sigma^{2}}\|Au-v_{n}-\epsilon_{p}\|^{2}+\frac{1}{2}\sum_{x}z(x)|\nabla u(x)-\epsilon_{m}(x)|^{2}\ensuremath{}
\end{equation*}
is equivalent to solving the linear system
\begin{equation*}
\tfrac1{\sigma^{2}}A'(Au-v_{n}-\epsilon_{p})+\nabla\cdot\left(z(\nabla u-\epsilon_{m})\right)=0
\end{equation*}
which can be done efficiently, for example by using the cg-method
or a multigrid solver. 
Moreover, note the resemblance of the resulting algorithm
to \emph{lagged diffusivity}.

\begin{algorithm}
\noindent \begin{centering}
\begin{algorithmic}[1]
    \Procedure{gsm\_sample}{$v_n, N$}
        \For{$i=1,\cdots,N$}
            \For{$x \in \Omega$}
                \State $\eta_0(x) \gets -\tfrac{1}{2}|\nabla u(x)|^2$
                \State $z(x) \gets
                    \text{sample from } \exp(z(x)\eta_0(x) - v(z(x)   ))$
                \State $\epsilon_p(x) \gets
                    \text{sample from }
                    \mathcal N(0, \sigma^2)$
                \If{$z(x)\neq 0$}
                    \State $\epsilon_m(x) \gets
                        \text{sample from }
                        \mathcal{N}_{2}\left(0, \tfrac{1}{z(x)}\right)$
                \EndIf
            \EndFor

            \State $u \gets
               \argmin_{u'}
               \frac{1}{2\sigma^2}\|Au' - v_n - \epsilon_p\|^2$\par
        \hskip\algorithmicindent
               \hfill $+ \frac{1}{2}\sum_x z(x) |\nabla u'(x) - \epsilon_m(x)|^2$

        \EndFor
        \State \textbf{return} $u, z$
    \EndProcedure
\end{algorithmic}


\par\end{centering}

\caption{Sampling algorithm for Gaussian scale mixture\label{alg:gsm-sample}}
\end{algorithm}


\section{Applications}
\label{sec:applications}
In this section we show results of the methods derived in Section~\ref{sec:methods}.
All methods have been implemented in Julia~\cite{bezanson2014julia}.
We applied the methods to color images by applying the developed methods for all color channels but averaging the squared gradient magnitude over all channels such that all color channels use the same edge information.
The color range of the images is always $[0,1]^{3}$.

Figure \ref{fig:perona-malik-1-denoising} show the results of the
Gaussian scale mixture to a denoising problem. As the prior-function $v$ for the edge weights we used
\[
v(z) = \frac{z}{\lambda} - \Big(\frac{C}{\lambda}-1\Big)\log(z)
\]
with parameters $C,\lambda>0$. One gets 
\begin{align*}
  \int_{0}^{\infty}e^{-tz-v(z)} dz & = \int_{0}^{\infty}e^{-\big(t+\tfrac1\lambda\big)z}z^{\tfrac{C}{\lambda}-1}dz\\
  & = \frac{\Gamma\big(\tfrac{C}{\lambda}\big)}{\big(t+\tfrac1\lambda\big)^{\tfrac{C}{\lambda}}}
\end{align*}
and 
\[
\psi(t) = -\log\Big(\Gamma\big(\tfrac{C}{\lambda}\big)\Big) + \log\Big(\big(t+\tfrac1\lambda\big)^{\tfrac{C}{\lambda}}\Big).
\]
and hence, one obtains the Perona-Malik diffusivity
\[
f(t) = \psi'(t) = \frac{C}{1+\lambda t}.
\]

\begin{figure*}[hp]
\noindent \begin{centering}
\begin{tabular}{cc}
\subfloat[Uncorrupted image]{\noindent \begin{centering}
\includegraphics[width=0.4\textwidth]{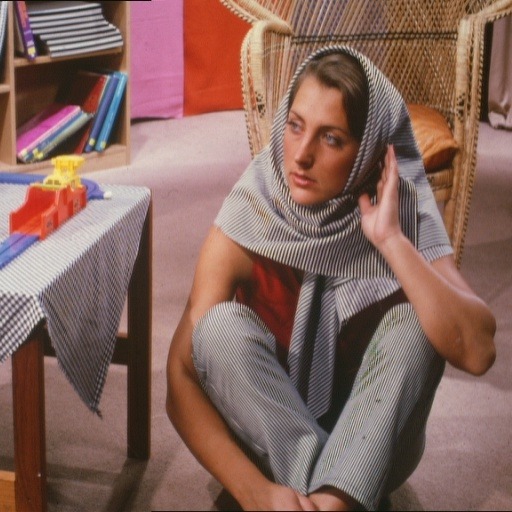}
\par\end{centering}

} & \subfloat[Noisy image\label{fig:perona-malik-denoising-corrupted}]{\noindent \begin{centering}
\includegraphics[width=0.4\textwidth]{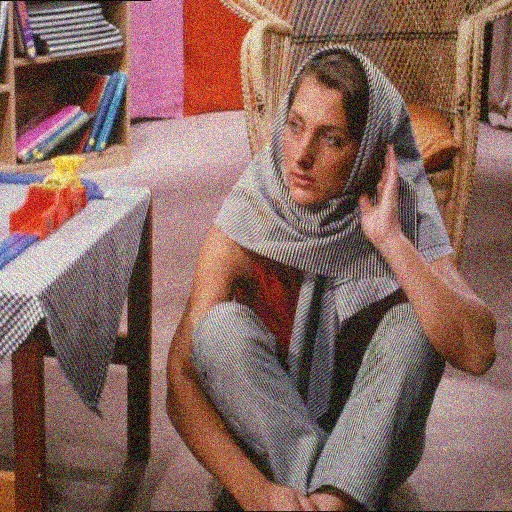}
\par\end{centering}

}\tabularnewline
\subfloat[Mean field approximation.\label{fig:perona-malik-denoising-meanfield}]{\noindent \begin{centering}
\includegraphics[width=0.4\textwidth]{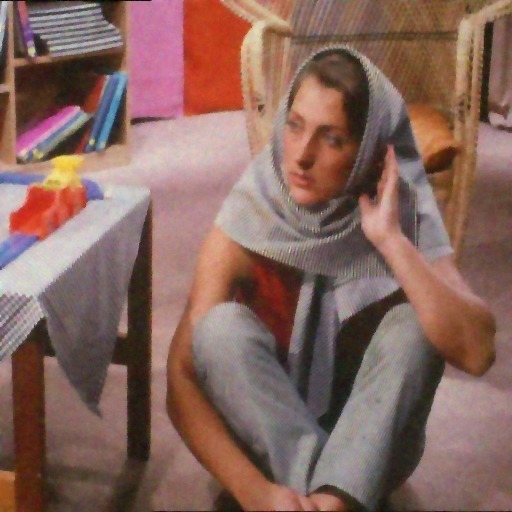}
\par\end{centering}

} & \subfloat[Mean edge image corresponding to mean field approximation\label{fig:perona-malik-denoising-meanfield-edge}]{\noindent \begin{centering}
\includegraphics[width=0.4\textwidth]{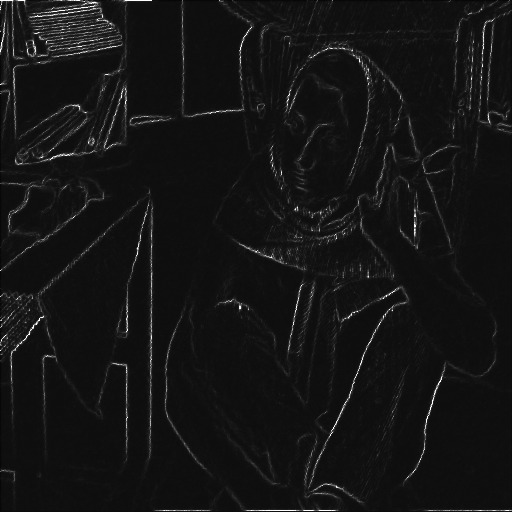}
\par\end{centering}

}\tabularnewline
\subfloat[MAP-assignment\label{fig:perona-malik-denoising-map}]{\noindent \begin{centering}
\includegraphics[width=0.4\textwidth]{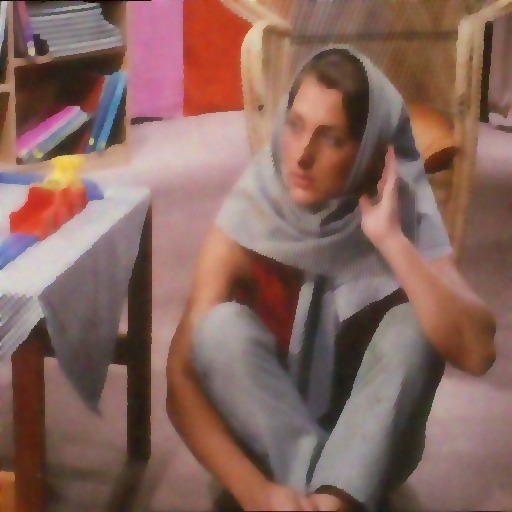}
\par\end{centering}

} & \subfloat[Mean edge weights corresponding to MAP-assignment\label{fig:perona-malik-denoising-map-edge}]{\noindent \begin{centering}
\includegraphics[width=0.4\textwidth]{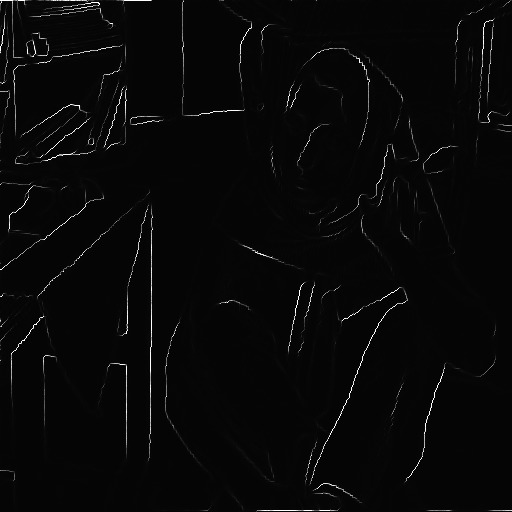}
\par\end{centering}

}\tabularnewline
\end{tabular}
\par\end{centering}

\caption[Denoising results for Perona-Malik prior]{Denoising results for Perona-Malik prior with $\lambda=C=10^{3}$
and Gaussian noise with $\sigma=0.1$.\label{fig:perona-malik-denoising}}
\label{fig:perona-malik-1-denoising}
\end{figure*}

Figure \ref{fig:perona-malik-denoising-map} shows the MAP-assignment
that we obtained by applying the EM algorithm to the image in Figure \ref{fig:perona-malik-denoising-corrupted} and Figure \ref{fig:perona-malik-denoising-meanfield}
shows the result from the (relaxed) mean field algorithm. We see that
the mean field algorithm finds more edges and better restores the
finer details in the image. This can also be seen in Figure \ref{fig:perona-malik-denoising-map-edge}
and Figure \ref{fig:perona-malik-denoising-meanfield-edge}, where
the corresponding mean edge weights $1/\xi_{0}$ are shown. Whereas
the EM algorithm tends to make a binary decision whether a given pixel
is part of an edge or not, the mean field algorithm also finds some
soft edges and textured areas in the image. Similarly, Figure \ref{fig:perona-malik-deconvolution}
shows the results obtained by applying the same Perona-Malik-prior as for the denoising problem to a
deconvolution problem. Again, we see that the mean field approximation
in Figure \ref{fig:perona-malik-deconvolution-meanfield} better captures
some of the finer details in the uncorrupted image than the corresponding
MAP-assignment in Figure \ref{fig:perona-malik-deconvolution-map}.

\begin{figure*}[ht]
\noindent \begin{centering}
\begin{tabular}{cc}
\subfloat[Uncorrupted image]{\noindent \begin{centering}
\includegraphics[bb=10bp 0bp 768bp 500bp,clip,width=0.45\textwidth]{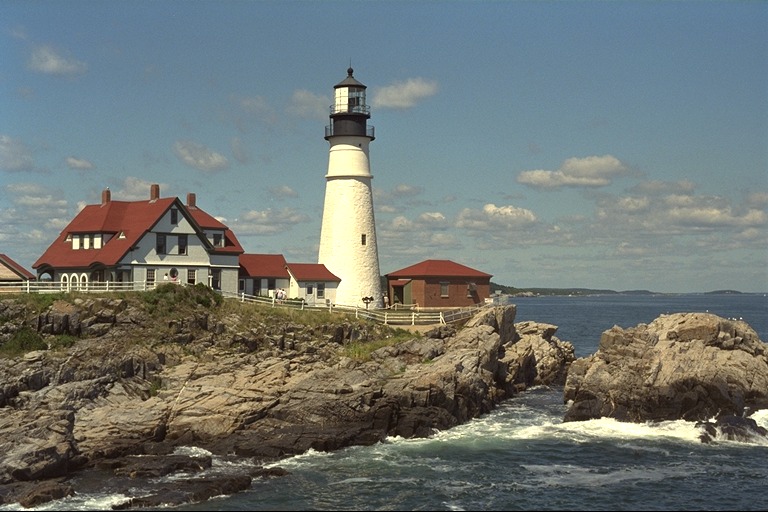}
\par\end{centering}

} & \subfloat[Blurry image]{\noindent \begin{centering}
\includegraphics[bb=10bp 0bp 768bp 500bp,clip,width=0.45\textwidth]{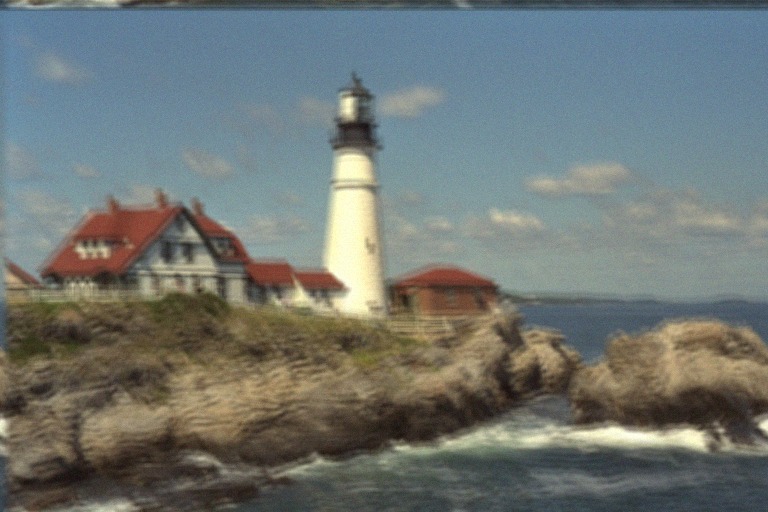}
\par\end{centering}

}\tabularnewline
\subfloat[Mean field approximation\label{fig:perona-malik-deconvolution-meanfield}]{\noindent \begin{centering}
\includegraphics[bb=10bp 0bp 768bp 500bp,clip,width=0.45\textwidth]{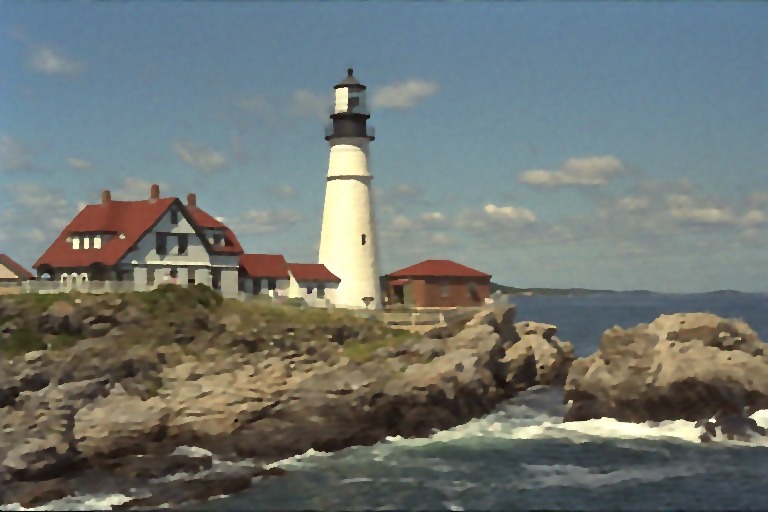}
\par\end{centering}

} & \subfloat[MAP-assignment\label{fig:perona-malik-deconvolution-map}]{\noindent \begin{centering}
\includegraphics[bb=10bp 0bp 768bp 500bp,clip,width=0.45\textwidth]{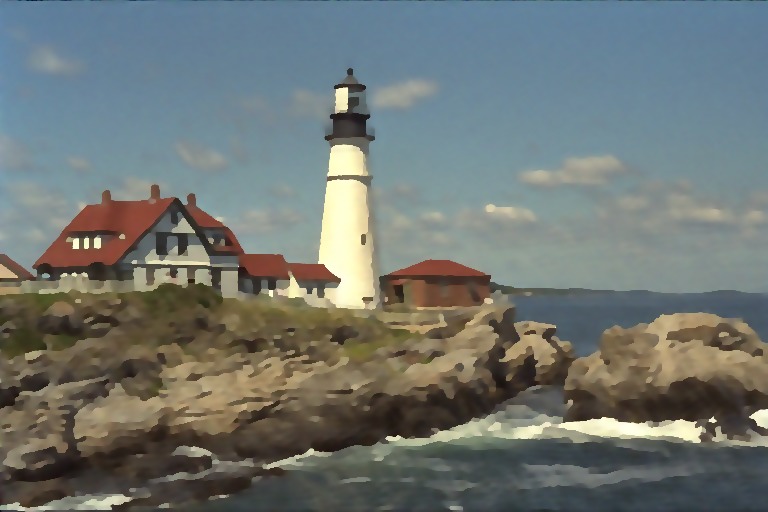}
\par\end{centering}

}\tabularnewline
\end{tabular}
\par\end{centering}

\caption[Deconvolution results for Perona-Malik prior]{Deconvolution results for Perona-Malik prior with $\lambda=C=4\cdot10^{3}$
and Gaussian noise with $\sigma=0.02$.\label{fig:perona-malik-deconvolution}}
\end{figure*}

Our last example shows that also discrete measures $q$ in~\eqref{eq:scale-mixture-model} can be used.
A discrete prior for the latent edge weight leads to a binary decision if a pixel is considered to be an edge pixel or not.
We define the prior by the counting measure $q$ concentrated on $\{0,\lambda\}$, i.e. $q = \delta_0 + \delta_\lambda$ and use the function
$v(z) = -\tfrac{\mu}{\lambda} z$.
This yields
\begin{align*}
  \psi(t) & = -\log\Big(\int e^{-\big(t-\tfrac{\mu}{\lambda})z}q(dz)\Big)\\
  & = -\log\Big(\int e^{-\big(t-\tfrac{\mu}{\lambda})z}(\delta_{0}+\delta_{\lambda})(dz)\Big)\\
  & = -\log\big(1 + e^{-\lambda t+\mu}\big)\\
  & = \log \sigma (\lambda t - \mu),
\end{align*}
where $\sigma$ is the sigmoid function given by $\sigma(t)=\tfrac{1}{1 + \mathrm e^{-t}}$. Because $\sigma^\prime(t) = \sigma(t)(1-\sigma(t))$, this shows that
\[
 f(t) = \psi^\prime (t) = \lambda\left( 1 - \sigma(\lambda t - \mu) \right).
\]
Intuitively, $z(x) = 0$ indicates that a given pixel $x$ belongs to an edge, while $z(x)=\lambda$ indicates the opposite. Note that this prior can therefore be interpreted as a probabilistic version
of the Mumford-Shah functional \cite{mumford1989optimal}.
While connections between the Perona-Malik model and the Mumford-Shah model have been observed previously in~\cite{morini2003mumford} where the Mumford-Shah functional appeared as the $\Gamma$-limit of Perona-Malik models for dicretized $\Omega$ while the discretization gets finer and finer, we obtain both models in the same discretized context.
Figure \ref{fig:edge-denoising-cm}
shows the conditional mean computed from the Markov chain after $100$
iterations of Gibbs sampling. Figure \ref{fig:edge-denoising-map}
shows the corresponding MAP-assignment computed using the EM-algorithm.
We see that in this case the conditional mean is much better at restoring
edges and fine details than the corresponding MAP-assignment. Figure
\ref{fig:edge-denoising-cm-edge} and Figure \ref{fig:edge-denoising-map-edge},
which show the corresponding mean edge images, confirm this hypothesis.
Better MAP-reconstructions can be obtained by changing the $\mu$-parameter,
but this corresponds to a different prior distribution.

\begin{figure*}[ht]
\noindent \begin{centering}
\begin{tabular}{cc}
\subfloat[Uncorrupted image]{\noindent \begin{centering}
\includegraphics[width=0.4\textwidth]{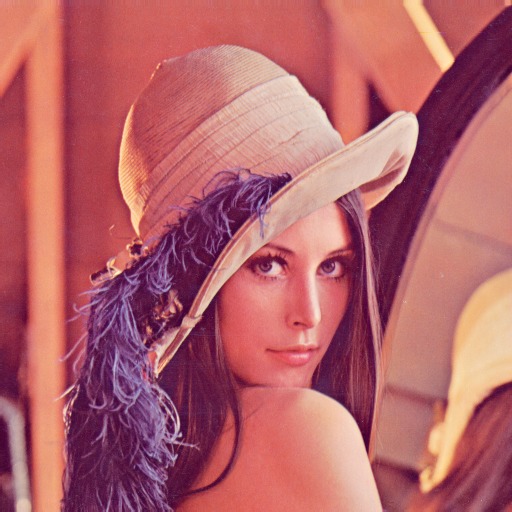}
\par\end{centering}

} & \subfloat[Noisy image]{\noindent \begin{centering}
\includegraphics[width=0.4\textwidth]{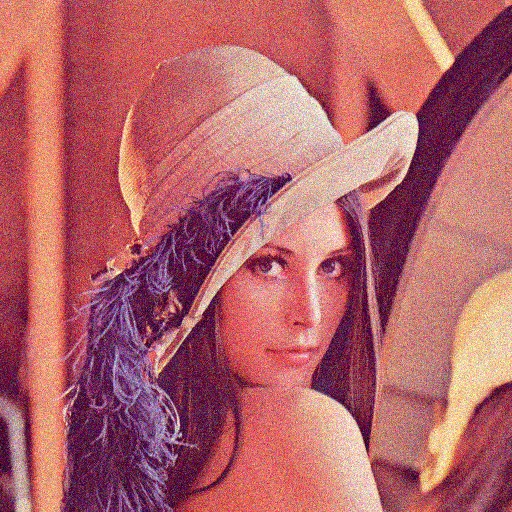}
\par\end{centering}

}\tabularnewline
\subfloat[Conditional mean\label{fig:edge-denoising-cm}]{\noindent \begin{centering}
\includegraphics[width=0.4\textwidth]{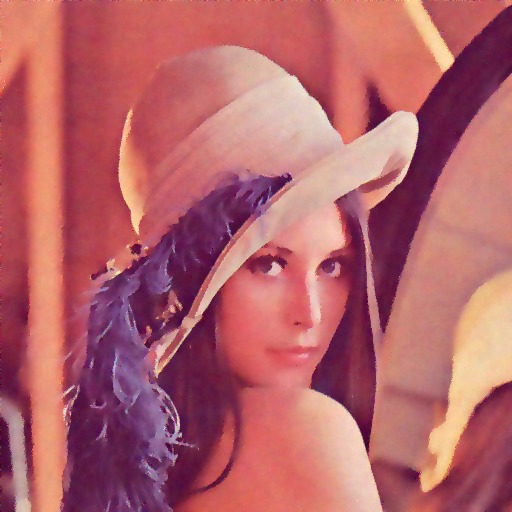}
\par\end{centering}

} & \subfloat[Conditional mean of corresponding edge image\label{fig:edge-denoising-cm-edge}]{\noindent \begin{centering}
\includegraphics[width=0.4\textwidth]{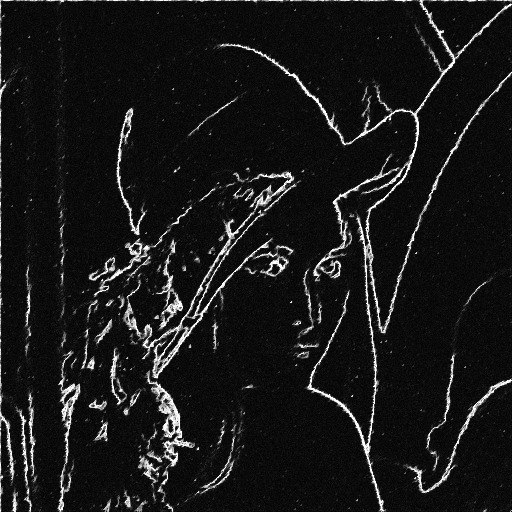}
\par\end{centering}

}\tabularnewline
\subfloat[MAP-assignment\label{fig:edge-denoising-map}]{\noindent \begin{centering}
\includegraphics[width=0.4\textwidth]{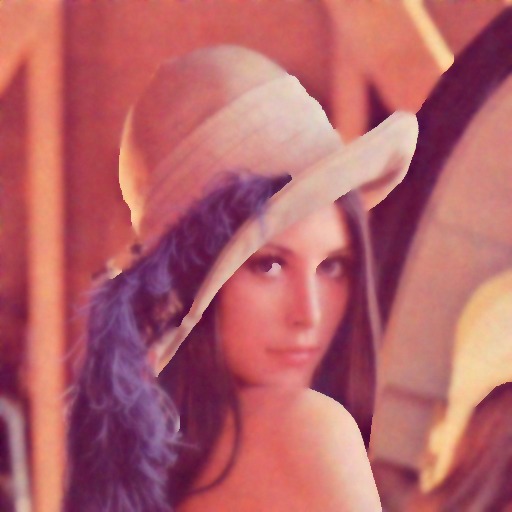}
\par\end{centering}

} & \subfloat[Mean edge weights corresponding to MAP-assignment\label{fig:edge-denoising-map-edge}]{\noindent \begin{centering}
\includegraphics[width=0.4\textwidth]{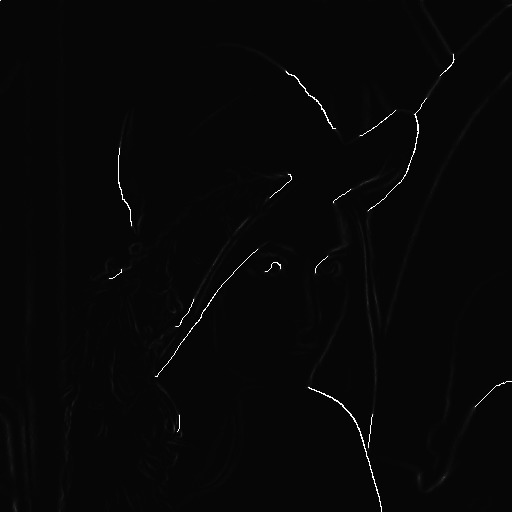}
\par\end{centering}

}\tabularnewline
\end{tabular}
\par\end{centering}

\caption[Denoising results for Mumford-Shah-prior]{Denoising results for the discrete Mumford-Shah-like edge prior with $\lambda=800.0$ and $\mu=3.8$
and Gaussian noise with $\sigma=0.1$.}
\label{fig:edge-denoising}
\end{figure*}


\section{Conclusion}
\label{sec:conclusion}
In this work we established the relationship between the celebrated Perona-Malik model with a probabilistic model for image processing.
We used Gaussian scale mixtures where we modeled the (inverse) variance of a Gaussian smoothness prior as a latent variable. 
We proposed different algorithmic approaches to infer information (usually images and edge maps) from the corresponding posterior and all algorithms resemble the lagged-diffusivity scheme for the Perona-Malik model in one way or another.
We suspect that the interpretation of the Perona-Malik model as a probabilistic model with a latent variable for the edge prior can be related to the underlying neurological motivation for non-linear diffusion models in human image perception as, e.g. outlined in early works of Grossberg at al., see e.g. \cite{cohen1984neural,grossberg1984outline,grossberg1985neural}.

Our interpretation of the Perona-Malik model as an EM algorithm explains the observed over-smoothing and staircasing in the sense that lagged-diffusivity approximates a MAP estimator of the posterior, which is in general not a good representative of the distribution.
Our method based on mean field approximation from Section~\ref{sec:meanfield} partly avoids this over-smoothing and staircasing effect
by explicitly incorporating the uncertainty in the image variable $u$.
However, the mean field approach in its plain form leads to a method with high computational cost and we proposed an approximate mean field method in Algorithm~\ref{alg:gsm-meanfield-approx}.
The approximation is based on a diagonal approximation $C$ of a covariance matrix. While this already leads to good results,  a possible improvement may be to restrict $C$ to the set of
$k\times k$ block matrices. Alternatively we could also restrict
it to the set of circular matrices. Both approximations can also be
combined by setting $C$ to a product of the form
\begin{equation*}
C=C_{block}C_{circ}C_{block}^{\intercal}
\end{equation*}
or even
\begin{equation*}
C=\left(\prod_{i}C_{block}^{(i)}C_{circ}^{(i)}\right)\left(\prod_{i}C_{block}^{(i)}C_{circ}^{(i)}\right)^{\intercal}
\end{equation*}
yielding better and better approximations to the true covariance matrix.


\section*{Acknowledgements}
 We would like to thank Sebastian Nowozin from Microsoft Research for
 some helpful literature hints.

\bibliographystyle{plain}
\bibliography{probmod}

\end{document}